\documentclass[11pt]{article}

\usepackage[margin=1in]{geometry}
\usepackage[utf8]{inputenc}
\usepackage{amsmath,amssymb,amsthm,mathtools}
\usepackage{graphicx}
\usepackage{microtype}
\usepackage{tikz}
\usetikzlibrary{calc}
\usepackage{booktabs}
\usetikzlibrary{positioning,fit,arrows.meta}
\usepackage{natbib}
\usepackage[colorlinks=true,linkcolor=blue,citecolor=blue,urlcolor=blue]{hyperref}
\usepackage{url}
\newtheorem{theorem}{Theorem}
\newtheorem{proposition}{Proposition}

\theoremstyle{definition}
\newtheorem{assumption}{Assumption}
\theoremstyle{remark}

\newcommand{\M}{\mathcal{M}}
\newcommand{\R}{\mathbb{R}}
\newcommand{\Sph}{\mathbb{S}}
\newcommand{\Hyp}{\mathbb{H}}
\newcommand{\E}{\mathbb{E}}
\newcommand{\dd}{\mathrm{d}}
\newcommand{\PT}[2]{\Gamma_{#1\to#2}}
\newcommand{\KL}{\mathrm{KL}}
\newcommand{\Tr}{\operatorname{tr}}

\newcommand{\Ng}{\mathcal{N}_g}
\DeclareMathOperator{\gradg}{grad_g}
\DeclareMathOperator{\divg}{div_g}
\DeclareMathOperator{\vol}{vol_g}
\DeclareMathOperator{\Hess}{Hess_g}

\newcommand{\Flow}{\Phi^{H}}
\newcommand{\dvol}{\,\mathrm{d}\vol}
\newcommand{\normg}[1]{\lVert #1\rVert_{g^{-1}}}

\title{Riemannian Neural Hamiltonian Flows:\\
Geodesic Symplectic Transport and Interpretability}
\author{Vincent Souveton}
\date{CEA, DAM, DIF, Arpajon, F-91297, France\\
Corresponding email: vincent.souveton@cea.fr}

\begin{document}
\maketitle

\begin{abstract}
\noindent Hamiltonian normalizing flows are attractive generative models because their phase-space maps are invertible and volume preserving, but most neural constructions are formulated in Euclidean space. We introduce Riemannian Neural Hamiltonian Flows, which combine the fixed kinetic energy of a Riemannian manifold, a learned scalar potential, and an explicit geodesic leapfrog integrator. Our analysis explains how the learned Hamiltonian can be made interpretable. Every normalizable potential defines an implicit profile, and the position marginal initially accelerates along the relative score between that profile and the base. The matched potential is the interpretable specialization for which the implicit profile is the target. In the isotropic Gaussian case, the mechanism corresponds to a phase-space rotation. A local harmonic analysis extends this result around each mode of a general target on a manifold. The gap between the learned and the matched potential is the sum of a residual memory of the base and a bias of the model, and the two potentials agree when the position base has been transferred to the momentum. This can be achieved when the former is broader than the target. Numerical experiments on Euclidean, hyperbolic, and spherical spaces show competitive sample quality and numerical cost against a Riemannian continuous normalizing flow, and confirm the interpretability of the learned potential.
\end{abstract}

\section{Introduction}

Learning expressive probability distributions is a central problem in generative modeling \citep{goodfellow2016}. Different classes of generative models approach this problem through different parametrizations and learning objectives. Variational autoencoders introduce a latent-variable model trained through a variational lower bound on the log-likelihood \citep{kingma2014}. Generative adversarial networks are trained through the competition between a discriminator, whose role is to discriminate between real and generated data, and a generator, whose role is to trick the discriminator \citep{goodfellow2014}. Denoising diffusion and score-based models estimate the score fields of a family of progressively perturbed distributions \citep{ho2020, song2021}. More recently, the Transformer architecture \citep{Vaswani2017} based on a self-attention mechanism has revolutionized the field of natural language processing, with extensions to computer vision \citep{dosovitskiy2021}, audio classification \citep{gong2021} or video generation \citep{hong2023}, among others. In this work, we instead rely on flow-based models that transport a base distribution to the target using an invertible map, allowing efficient and stable likelihood-based training. Among existing approaches, normalizing flows rely on a finite sequence of smooth bijections and density evaluation follows from the change-of-variables formula \citep{tabak2010, rezende2015, dinh2017, papamakarios2021}. Continuous normalizing flows replace the discrete sequence by an ordinary differential equation and recover density changes by integrating the divergence of the learned vector field \citep{chen2018, grathwohl2019}. Flow matching instead learns a continuous transport from prescribed probability paths, without explicitly integrating the corresponding density equation \citep{lipman2023, liu2023, albergo2025}. These constructions provide flexible mechanisms for transporting probability mass, but likelihood evaluation generally requires Jacobian determinants or divergence estimates, and most formulations are developed for flat Euclidean spaces.

Many data sets live instead on curved spaces: directions on spheres \citep{mardia2000}, hierarchical representations of symbolic data in hyperbolic spaces \citep{nickel2017}, or periodic variables on tori. Applying a Euclidean flow to such data may distort intrinsic distances and volumes, violate support constraints, or introduce coordinate singularities. This has motivated chart- and embedding-based flows \citep{gemici2016, rezende2020}, intrinsic continuous flows \citep{lou2020, mathieu2020}, divergence-based constructions \citep{rozen2021}, Riemannian score-based diffusions \citep{debortoli2022}, and Riemannian flow matching \citep{chen2024}. These methods differ in their objectives and parameterizations but they all require the learned transport to account explicitly for the geometry-dependent change of volume.

Hamiltonian dynamics provides a complementary route. A Hamiltonian flow acts on an extended phase space of positions and momenta and preserves its canonical Liouville volume. Lifting an observed variable $q$ to a phase-space state $(q,p)$ therefore yields an invertible transformation with unit phase-space Jacobian. For sampling purposes, Hamiltonian dynamics was first introduced in computational statistics as a proposal mechanism for Markov chain Monte Carlo \citep{robert2004}. Hamiltonian Monte Carlo augments the variable of interest with an auxiliary momentum and follows approximately energy-preserving trajectories to propose distant moves through the state space, while a Metropolis correction preserves the desired target distribution \citep{duane1987, neal2011, betancourt2017}. Riemannian variants further adapt the kinetic geometry to the local structure of the target \citep{girolami2011, byrne2013}. More recently, Hamiltonian maps have also been used as generative models rather than as components of an MCMC transition \citep{caterini2018, toth2020, holderrieth2024}. In particular, Neural Hamiltonian Flows (NHF) learn a scalar Hamiltonian, yielding a volume-preserving generative map whose phase-space Jacobian is identically one and whose learned energy landscape can be inspected directly \citep{souveton2024}. However, these constructions have been developed in Euclidean spaces.

This leaves three questions open. First, can Hamiltonian generative models be extended intrinsically to Riemannian manifolds, while respecting the geometry of positions, momenta, geodesics, and volume? Second, do the resulting models retain the practical properties observed in the Euclidean setting, namely competitive sample quality, appealing computational cost without learned Jacobian determinants or divergence estimates, and an interpretable learned scalar potential? Third, if such models are effective, what mechanism explains their interpretability? This paper addresses these questions through Riemannian Neural Hamiltonian Flows (RNHF), an intrinsic extension of Neural Hamiltonian Flows to non-Euclidean geometries. Our goal is to go beyond the Euclidean formulation of NHF and provide a theoretical framework for the interpretability of generative Hamiltonian models with latent variables. More specifically, our contributions are threefold. 

\begin{enumerate}
\item \textbf{A Riemannian Hamiltonian generative model.} We introduce the RNHF model, an extension of NHF which combines the fixed kinetic energy induced by the Riemannian metric with a learned scalar potential and a geodesic kick--drift--kick integrator. The resulting map is symplectic, reversible, and volume-preserving. RNHF is trained through an evidence lower bound and requires neither Jacobian determinants nor Riemannian divergence estimates.

\item \textbf{A framework for interpretability.} We demonstrate that there is a competition between the potential force and the pressure induced by the momentum distribution. Every potential defines an implicit profile, the matched potential being the interpretable case in which that profile is the target. At finite time, the isotropic Gaussian case shows that at a quarter period every position base is mapped onto the target, all base information being transferred to the momentum. A local Riemannian analysis extends this around non-degenerate modes. Finally, we show that the difference between the learned and the matched potential is the sum of the residual memory of the base and the bias of the model, and discuss under what conditions the learned potential stays close enough to the matched one to be read as an energy landscape.

\item \textbf{A controlled comparison between theory and practice.} We evaluate RNHF on $\mathbb{R}^2$, $\mathbb{H}^2$, and $\mathbb{S}^2$ against a Riemannian continuous normalizing flow constructed from the same geometric primitives. The experiments assess three aspects separately: sample quality, computational structure, and interpretability. They show that RNHF can recover multimodal targets on all three geometries, while the learned potentials place wells at the data modes and remain close to the theoretically distinguished matched landscapes.
\end{enumerate}

The paper is organized as follows. Section~\ref{sec:model} presents the RNHF architecture and its geodesic symplectic integrator. Section~\ref{sec:theory} develops the theoretical framework: what drives the dynamics, how Hamiltonian transport is achieved in finite time, and what separates the learned potential from the matched one. Section~\ref{sec:experiments} reports the numerical results, and Section~\ref{sec:conclusion} concludes. A brief introduction to Riemannian geometry as well as mathematical derivations and technical details can be found in the Appendices.

\section{The RNHF model}
\label{sec:model}

We are given samples $q^{(1)},\dots,q^{(n)}$ drawn from an unknown target density $\rho_1$ on a space $\M$, and seek to learn a model that can generate new samples from that density. In this work, $\M$ is a curved space, so one needs to define the notions of distances, volumes and supports in an intrinsic manner. Indeed, a generative model built in an ambient Euclidean chart may place mass off the manifold, distort volumes, or consider close two points that are actually geodesically far apart. Also, the change-of-variables formula, which is what makes flow-based models trainable, involves a Jacobian determinant. On a manifold this becomes a determinant relative to the volume measure, or, in continuous time, a Riemannian divergence. Both are expensive and must be differentiated through during training. In this paper, we make such cost vanish by learning a Hamiltonian map that is volume preserving by construction. This section introduces Riemannian Neural Hamiltonian Flows (RNHF). We begin with the Hamiltonian dynamics we use and its intrinsic, geometry-aware integrator, recall the Euclidean Neural Hamiltonian Flows (NHF) on which our construction is based, and then define the Riemannian model itself. 

\subsection{Hamiltonian dynamics on manifolds}
\label{sec:hamiltonian}

A brief recap of Riemannian geometry is provided in Appendix~\ref{app:geometry} which contains all the notations. In this framework, a phase-space state is a position-momentum pair $(q,p)\in T^*\M$. We consider the classical autonomous Hamiltonian
\begin{equation}
  H(q,p)=K(q,p)+V(q),\qquad K(q,p)=\tfrac12\,p^{\top}G(q)^{-1}p=\tfrac12\normg{p}^2,
  \label{eq:H}
\end{equation}
the sum of a fixed kinetic energy determined by the metric and a scalar potential $V:\M\to\R$. In local canonical coordinates, the dynamics follows Hamilton's equations which read
\begin{equation}
  \dot q=G(q)^{-1}p,\qquad
  \dot p=-\tfrac12\,\tfrac{\partial}{\partial q}\bigl(p^{\top}G(q)^{-1}p\bigr)-\mathrm{d}V(q).
  \label{eq:hameq}
\end{equation}
The first equation converts momentum into velocity through \eqref{eq:sharp}. The second contains the geometric contribution of the kinetic energy and the force $-\mathrm{d}V$. The flow $\Flow_t$ has two important properties: it conserves the energy $H$, and it preserves the canonical Liouville volume $\lambda$ on $T^*\M$ \citep{landau1976}. We integrate \eqref{eq:hameq} with a geodesic leapfrog scheme. With step size $h$, one geodesic kick--drift--kick (KDK) step is
\begin{equation}
  \begin{cases}
    p^{n+1/2}=p^{n}-\tfrac h2\,\mathrm{d}V(q^{n}),\\[2pt]
    q^{n+1}=\exp_{q^{n}}\!\bigl(h\,g^{-1}_{q^{n}}p^{n+1/2}\bigr),\qquad
    \widetilde p^{\,n+1/2}=\Gamma_{q^{n}\to q^{n+1}}\,p^{n+1/2},\\[2pt]
    p^{n+1}=\widetilde p^{\,n+1/2}-\tfrac h2\,\mathrm{d}V(q^{n+1}).
  \end{cases}
  \label{eq:kdk}
\end{equation}
The kick is the exact flow of $V$ alone, the drift is the exact geodesic flow of $K$ alone, and the parallel transport keeps the momentum in the correct cotangent space. The composition is symplectic, exactly volume preserving for every $h$, reversible by applying the inverse substeps in reverse order, and second-order accurate \citep{hairer2006}.

To draw the initial momentum, we need an intrinsic notion of a Gaussian. Let $\mathrm{d}\nu_q(p)$ be the Euclidean volume on $T^*_q\M$ induced by the dual metric $g_q^{-1}$. In an orthonormal coframe, it is ordinary Lebesgue measure. The Liouville volume then factorises intrinsically as
\begin{equation}
  \mathrm{d}\lambda(q,p)=\dvol(q)\,\mathrm{d}\nu_q(p),
  \label{eq:liouville}
\end{equation}
and, for a scale $\tau>0$, we define the following centered isotropic distribution:
\begin{equation}
  M_\tau(q,p)=\frac{1}{(2\pi\tau)^{d/2}}\exp\Bigl(-\frac{\normg{p}^2}{2\tau}\Bigr),
  \qquad\text{written}\qquad P\mid Q=q\ \sim\ \Ng(0,\tau I).
  \label{eq:maxwellian}
\end{equation}
It satisfies $\int_{T^*_q\M} M_\tau(q,p)\,\mathrm{d}\nu_q(p)=1$ for every $q$: in any orthonormal coframe the momentum components are independent centered Gaussians of variance $\tau$. The normalising constant $(2\pi\tau)^{d/2}$ in \eqref{eq:maxwellian} does not depend on $q$, so the momentum fibre will integrate out and leave the position marginal untouched. All statements below use the orthonormal convention.

\subsection{Neural Hamiltonian Flows in Euclidean space}
\label{sec:nhf}

In this work, we rely on normalizing flows which push a base distribution $\rho_0$ onto the target $\rho_1$ through a finite sequence of learned smooth bijections $\Phi = \Phi_L \circ \cdots \circ \Phi_1$, see \cite{papamakarios2021} for a review. Training consists in learning how to map a sample from the target distribution to the base, and sampling consists in drawing one sample from the base distribution and reversing the learned transformation to map it to the target. The change of variable formula applied to the model distribution $\rho_ \theta$ reads
\begin{equation}
    \rho_\theta(s) = \rho_0\left( \Phi^{-1}(s) \right) \prod_{\ell=1}^L \left| \det \rm{Jac} \ \Phi_\ell^{-1}(s) \right|,
    \label{eq:nf}
\end{equation}
and the model can be trained by minimizing the Kullback-Leibler divergence
\begin{equation}
    \KL(\rho_1 || \rho_\theta) = -\mathbb{E}_{s \sim \rho_1} [\log p_\theta(s)] + \rm{cst}.
    \label{eq:KL}
\end{equation}
The limiting cost is the Jacobian determinant computation which constrains the choice of mapping. As we have seen, Hamiltonian dynamics removes that cost structurally.

NHF instantiates these ideas on $\R^d$. For sampling, one draws a pair $(q^0,p^0)\in\R^d\times\R^d$ from a phase-space base distribution, evolves it with a symplectic leapfrog integrator $\Phi_\theta$ with a learned Hamiltonian $H_\theta(q,p)=V_\theta(q)+H_\theta(p)$, and only retains the final position \citep{toth2020}. For training, because the leapfrog map has unit Jacobian, the change of variable formula~\eqref{eq:nf} involves no determinant term. However, the position distribution is a marginal over the momenta $\rho_\theta(q)=\int f_\theta(q,p)\,\mathrm{d}p$, and is generally unavailable in closed form, so training is variational: an encoder proposes a momentum at the data point $q$ following $q_\phi(\cdot\mid q)$, the inverse map is applied to obtain $(q^0,p^0)=\Phi_\theta^{-1}(q,p)$ and one maximises the following evidence lower bound (ELBO):
\begin{equation}
  \mathcal L(q;\theta,\phi,\sigma_p)
  =\mathbb E_{p\sim q_\phi(\cdot\mid q)}\Bigl[\log\rho_0(q^0)+\log\mathcal{N}\bigl(p^0;0,\sigma_p^2I\bigr)-\log q_\phi(p\mid q)\Bigr].
  \label{eq:elbo0}
\end{equation}

The fixed-kinetic variant freezes the kinetic term to $\tfrac12\lVert p\rVert^2$ and learns only the potential $V_\theta$ \citep{souveton2024}. This reduces the number of parameters and makes the learned object interpretable: an energy landscape on the data space, whose wells attract probability mass. Such property will be explained in Section~\ref{sec:theory}. However, both formulations are Euclidean.

\subsection{Riemannian Neural Hamiltonian Flows}
\label{sec:rnhf}

RNHF keeps the fixed-kinetic principle and replaces every Euclidean ingredient by its Riemannian counterpart, using the objects defined in Appendix~\ref{app:geometry} and Section~\ref{sec:hamiltonian}. The architecture is illustrated in Fig.~\ref{fig:model}. We now describe each block.

\begin{figure}[!ht]
\centering
\resizebox{\linewidth}{!}{%
\begin{tikzpicture}[
  box/.style={draw, rounded corners, align=center, inner sep=6pt, font=\small},
  inner/.style={draw, rounded corners=2pt, align=center, inner sep=4pt, font=\footnotesize, fill=gray!10},
  lbl/.style={font=\footnotesize\itshape},
  arr/.style={-{Latex[length=2.5mm]}, thick}
]
\node[box] (base) {base on $T^*\M$\\[2pt] $q^0 \sim \rho_0$\\ $p^0 \sim \Ng\big(0,\, \sigma_p^2 I\big)$};
\node[inner, right=20mm of base] (kick1) {half-kick\\ $p \leftarrow p - \tfrac{h}{2}\, \dd V_\theta(q)$};
\node[inner, right=5mm of kick1] (drift) {geodesic drift\\ $q \leftarrow \exp_q\!\big(h\, g_q^{-1} p\big)$\\ $p \leftarrow \PT{q}{q'}\, p$};
\node[inner, right=5mm of drift] (kick2) {half-kick};
\node[draw, rounded corners, fit=(kick1)(drift)(kick2), inner sep=8pt,
      label={[font=\small]above:{geodesic leapfrog $\Phi^H$ \eqref{eq:kdk}: $L$ steps, unit Jacobian on $T^*\M$}}] (lf) {};
\node[box, right=20mm of lf] (data) {Target and encoder on $T^*\M$\\[2pt] data $q \sim \rho_1$\\ $p \sim \Ng\big(\mu_\phi(q),\, \Sigma_\phi(q)\big)$};

\draw[arr] ([yshift=5mm]base.east) -- ([yshift=5mm]lf.west);
\draw[arr] ([yshift=5mm]lf.east) -- ([yshift=5mm]data.west);
\node[lbl, above] at ($([yshift=5mm]base.east)!0.5!([yshift=5mm]lf.west)$) {sampling};

\draw[arr] ([yshift=-5mm]data.west) -- ([yshift=-5mm]lf.east);
\draw[arr] ([yshift=-5mm]lf.west) -- ([yshift=-5mm]base.east);
\node[lbl, below] at ($([yshift=-5mm]lf.east)!0.5!([yshift=-5mm]data.west)$) {training};
\end{tikzpicture}%
}
\caption{The RNHF model. Left to right (sampling): a phase-space point is drawn from the base and pushed through $L$ geodesic leapfrog steps. The final position is the sample. Right to left (training): the encoder proposes a momentum at the data point, the discrete substeps are applied in reverse order and the ELBO \eqref{eq:elbo} evaluates the base densities at the resulting phase-space point.}
\label{fig:model}
\end{figure}

\paragraph{Base and encoder distributions.}
The position base $\rho_0$ can be a Gaussian on $\R^2$, a wrapped Gaussian on $\Hyp^2$, or the uniform law on $\Sph^2$. The momentum base is the isotropic distribution \eqref{eq:maxwellian},
\begin{equation}
  p^0\mid q^0 \sim \Ng\bigl(0, \sigma_p^2I\bigr),
  \label{eq:mombase}
\end{equation}
where $\sigma_p$ is a learned global scalar. The encoder is the intrinsic Gaussian
\begin{equation}
  q_\phi(p\mid q)=\Ng\bigl(\mu_\phi(q),\Sigma_\phi(q)\bigr),
  \label{eq:encoder}
\end{equation}
whose mean is a cotangent vector and whose covariance is positive definite in an orthonormal coframe.

\paragraph{Hamiltonian.}
The potential $V_\theta : \M \to \R$ takes an ambient representation of the point and returns a scalar. The force is the intrinsic differential $\dd V_\theta$. For reasons detailed in Section~\ref{sec:theory}, the potential is parameterized as $-\sigma_p^2\log U_\theta$, where $U_\theta>0$ is the positive output of a neural network. The kinetic energy is fixed to $\tfrac12\, p^\top G(q)^{-1}\, p$.

\paragraph{Numerical integration.}
The flow is computed by \eqref{eq:kdk}. Consecutive endpoint half-kicks can be fused, so a full pass costs $L + 1$ evaluations of $\dd V_\theta$. The trajectory stays on the manifold, and every kick and drift is the exact flow of one Hamiltonian term. Therefore, the discrete map is symplectic and preserves Liouville volume exactly, at any step size. During training its inverse is obtained by reversing the time grid and the order of all substeps. The discrete map preserves Liouville volume exactly but, at finite step size, does not conserve the original Hamiltonian exactly. 

\paragraph{Training and sampling.} The principle is the same as the Euclidean version. For a training point $q\in \M$, we draw $p\sim q_\phi(\cdot\mid q)$ and apply the inverse map to obtain $(q^0,p^0)=\Phi_\theta^{-1}(q,p)$. The evidence lower bound is written for densities living on the manifold:
\begin{equation}
  \mathcal L(q;\theta,\phi,\sigma_p)
  =\mathbb E_{p\sim q_\phi(\cdot\mid q)}\Bigl[\log\rho_0(q^0)+\log\Ng\bigl(p^0;0,\sigma_p^2I\bigr)-\log q_\phi(p\mid q)\Bigr].
  \label{eq:elbo}
\end{equation}
No Jacobian term appears, because $\Phi_\theta$ preserves the Liouville volume exactly. Sampling is direct: draw $q^0\sim\rho_0$ and $p^0$ from \eqref{eq:mombase}, apply $\Phi_\theta$, and retain the final position. \\

\paragraph{Reading.} As a latent-variable model, one may define RNHF as a position-space Markov kernel. Fix a potential $V$, a variance $\tau=\sigma_p^2$, and an integration time $T$. For a starting position $q$, draw a momentum $P\sim\Ng(0,\tau I)$ in $T_q^*\M$, evolve $(q,P)$ under the Hamiltonian flow, and retain only the final position. This defines the Markov kernel
\begin{equation}
  \mathsf K_{V,\tau,T}(q,A)
  :=\mathbb P\!\left(\pi\Flow_T(q,P)\in A\right),
  \qquad A\subseteq\M\ \text{Borel},
  \label{eq:markovkernel}
\end{equation}
where $\pi:T^*\M\to\M$ is the canonical position projection. Now, let $\mu_0(\dd q)=\rho_0(q)\dvol(q)$. The position law after one exact Hamiltonian pass is
\begin{equation}
  \mu_T(A)=\int_{\M}\mathsf K_{V,\tau,T}(q,A)\,\mu_0(\dd q).
  \label{eq:kernelrep}
\end{equation}
Moreover, if $\mathsf K_{V,\tau,T}(q,\cdot)$ admits a density $k_{V,\tau,T}(\cdot\mid q)$ with respect to $\vol$, then
\begin{equation}
  \rho_T(x)=\int_{\M}\rho_0(q)k_{V,\tau,T}(x\mid q)\dvol(q).
  \label{eq:kerneldensity}
\end{equation}

This representation isolates the two ingredients of the model: the potential determines how trajectories bend, while the momentum distribution determines how each starting point is spread into a cloud of landing positions. If the landing law becomes nearly independent of $q$ over the region carrying the mass of $\rho_0$, then the final position carries little information about the initial position. Invertibility forces this information to remain encoded in the final momentum.

\section{Finite-time Hamiltonian transport and interpretability}
\label{sec:theory}

This section explains the mechanism behind one RNHF pass. We first exhibit the general mechanism, namely a competition between potential force and momenta pressure. We then specialize to the matched potential, for which the preferred profile is the target density and we solve the Gaussian case exactly, using its phase-space rotation to interpret multimodal and Riemannian models. Finally, we investigate the differences between the learned RNHF potential and the matched one, explaining the mechanisms towards better interpretability of the learned potential. Throughout the section we write $\tau:=\sigma_p^2$ for the initial momentum variance. The theoretical results are obtained with the exact Hamiltonian flow. The leapfrog scheme is volume-preserving and reversible but it does not conserve $H$ exactly at finite step size so that its trajectories accumulate a second-order global phase error which decreases with the size of timesteps. We rely on the following set of assumptions.

\begin{assumption}\label{ass:standing}
Let $(\M,g)$ be a smooth, complete, connected $d$-dimensional Riemannian manifold. We rely on the exact dynamics generated by \eqref{eq:H}. The base and target densities $\rho_0$ and $\rho_1$, as well as the potential $V$ are smooth, with $\rho_0, \rho_1>0$, and the Hamiltonian flow exists on the time interval $[0,T]$ for almost every initial condition. On non-compact manifolds we additionally assume enough decay for the integrations by parts below and finite second momentum moments. See~Appendix~\ref{app:spaces} for a discussion on the validity of these assumptions on our model spaces and usual distributions. 
\end{assumption}

\subsection{A competition between potential force and momenta pressure}
\label{sec:confinement}

We first study an arbitrary potential $V$ and we call $\pi_{V,\tau} := -\log V$ its implicit profile, provided it is well defined. The key observation is that the potential force does not act alone: the initial momentum distribution creates an isotropic pressure and confinement is the result of the competition between these two effects. Think of a cloud of non-interacting particles with unit mass. Each particle is defined by its position $q$ and momentum $p$, and moves under Hamilton's equations. Let $f_t(q,p)$ be the density of the cloud in phase space at time $t$. We track its first three momentum moments, which are functions of $q$ alone:
\begin{equation}
  \underbrace{\rho_t(q)=\int_{T_q^*\M}f_t(q,p)\,\dd\nu_q(p)}_{\text{how much mass is here}},
  \quad
  \underbrace{u_t(q)=\frac{1}{\rho_t(q)}\int_{T_q^*\M}p^\sharp f_t(q,p)\,\dd\nu_q(p)}_{\text{where it is going}},
  \quad
  \underbrace{S_t(q)=\int_{T_q^*\M}p^\sharp\otimes p^\sharp f_t(q,p)\,\dd\nu_q(p)}_{\text{how it spreads}}.
  \label{eq:moments}
\end{equation}
Here $\rho_t$ is a scalar, $u_t$ is a vector field (the mean velocity of the particles sitting at $q$), and $S_t$ is a symmetric $d\times d$ matrix field. The quantity $\rho_t$ is the only thing the model outputs since the momentum is marginalised out at the end.

\begin{theorem}[General initial confinement]\label{thm:general-confinement}
Assume that
\begin{equation}
  f_0(q,p)=\rho_0(q)M_\tau(q,p).
  \label{eq:initiallift}
\end{equation}
Then the exact Hamiltonian evolution satisfies the moment equations
\begin{align}
  \partial_t\rho_t+\divg(\rho_tu_t)&=0,
  \label{eq:continuity}\\
  \partial_t(\rho_tu_t)+\divg S_t&=-\rho_t\gradg V.
  \label{eq:momentum-equation}
\end{align}
At $t=0$,
\begin{equation}
  u_0=0,
  \qquad
  S_0=\tau\rho_0g^{-1},
  \label{eq:maxwellmoments}
\end{equation}
and therefore
\begin{align}
  \left.\partial_tu_t\right|_{t=0}
  &=-\gradg\!\left(V+\tau\log\rho_0\right),
  \label{eq:generalacceleration}\\
  \left.\partial_t\rho_t\right|_{t=0}&=0,
  \label{eq:firstdensity}\\
  \left.\partial_{tt}\rho_t\right|_{t=0}
  &=\divg\!\left[\rho_0\gradg\!\left(V+\tau\log\rho_0\right)\right].
  \label{eq:seconddensity-general}
\end{align}
If $Z_{V,\tau}:=\int_{\M}e^{-V/\tau}\dvol<\infty$, then $\pi_{V,\tau}:=Z_{V,\tau}^{-1}e^{-V/\tau}$ and the same identities read
\begin{align}
  \left.\partial_tu_t\right|_{t=0}
  &=\tau\gradg\log\frac{\pi_{V,\tau}}{\rho_0},
  \label{eq:implicit-relative-score}\\
  \left.\partial_{tt}\rho_t\right|_{t=0}
  &=\tau\divg\!\left(\rho_0\gradg\log\frac{\rho_0}{\pi_{V,\tau}}\right).
  \label{eq:implicit-density-direction}
\end{align}
\end{theorem}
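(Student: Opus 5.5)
The approach is to derive the moment equations from the Liouville (Vlasov) equation satisfied by the phase-space density under the exact flow $\Flow_t$. Because the flow preserves the Liouville volume \eqref{eq:liouville}, the transported density satisfies $\partial_t f_t+\{f_t,H\}=0$. I would test this equation against functions of the form $\varphi(q)$ and $\langle \alpha(q),p\rangle$, where $\alpha$ is a smooth compactly supported one-form, and integrate over $T^*\M$ with respect to $\dd\lambda$. The decay hypotheses of Assumption~\ref{ass:standing} justify discarding boundary terms.

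\textbf{Continuity equation.} For $\varphi(q)$, we have $\{\varphi,H\}=\langle \dd\varphi,g^{-1}p\rangle=\langle \dd\varphi,p^\sharp\rangle$. Integrating over the fibres gives $\frac{\dd}{\dt}\int\varphi\rho_t=\int\langle\dd\varphi,\rho_tu_t\rangle$, which is the weak form of \eqref{eq:continuity}.

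\textbf{Momentum equation.} For the linear test function, I would work in normal coordinates centred at an arbitrary point (or equivalently use the geodesic-spray form of \eqref{eq:hameq}). The time derivative of $\langle\alpha,p\rangle$ along the flow is then $(\nabla_{p^\sharp}\alpha)(p^\sharp)-\langle\alpha,\gradg V\rangle$. The Christoffel terms coming from $\partial_q(p^\top G^{-1}p)$ are exactly what turn the naive coordinate derivative into the covariant one. Integrating over the fibres gives $\int \nabla\alpha : S_t-\int\rho_t\langle\alpha,\gradg V\rangle$, which is the weak form of \eqref{eq:momentum-equation} with the divergence of a symmetric 2-tensor taken as $(\divg S)^i=\nabla_jS^{ij}$. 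I expect this step to be the main obstacle: one has to check carefully that the geometric part of the kinetic force produces precisely the covariant divergence. Normal coordinates, where the Christoffel symbols vanish at the point but their derivatives do not, should make this bookkeeping manageable.

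\textbf{Initial moments.} At $t=0$, $M_\tau$ is even in $p$ and has covariance $\tau g^{-1}$ in each fibre, with a $q$-independent normalisation in an orthonormal coframe. Hence $u_0=0$ and $S_0=\tau\rho_0g^{-1}$, which is \eqref{eq:maxwellmoments}. Since $g$ is parallel, $\divg(\tau\rho_0g^{-1})=\tau\gradg\rho_0$. Plugging into \eqref{eq:momentum-equation} and using $u_0=0$ gives $\rho_0\partial_tu_t|_0=-\rho_0\gradg V-\tau\gradg\rho_0$, and dividing by $\rho_0>0$ yields \eqref{eq:generalacceleration}.

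\textbf{Density derivatives.} From \eqref{eq:continuity}, $\partial_t\rho_t|_0=-\divg(\rho_0u_0)=0$, which is \eqref{eq:firstdensity}. Differentiating \eqref{eq:continuity} once more in time gives $\partial_{tt}\rho=-\divg\partial_t(\rho u)$. At $t=0$, \eqref{eq:momentum-equation} gives $\partial_t(\rho u)|_0=-\rho_0\gradg(V+\tau\log\rho_0)$, which yields \eqref{eq:seconddensity-general}. This requires enough regularity of $t\mapsto f_t$ to exchange the time derivative with the fibre integrals, which is granted by smoothness together with the finite second-moment assumption.

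\textbf{Reformulation via the implicit profile.} When $Z_{V,\tau}<\infty$, we have $V=-\tau\log\pi_{V,\tau}-\tau\log Z_{V,\tau}$. Since the constant disappears under the gradient, $V+\tau\log\rho_0=\tau\log(\rho_0/\pi_{V,\tau})$ up to that constant. Substituting this into \eqref{eq:generalacceleration} and \eqref{eq:seconddensity-general} gives \eqref{eq:implicit-relative-score} and \eqref{eq:implicit-density-direction}.
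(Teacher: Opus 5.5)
Your proposal is correct and follows essentially the same route as the paper. It is a weak derivation of the moment equations: test against $\varphi(q)$ and a linear-in-momentum observable (your $\alpha(p^\sharp)$ is the paper's $\langle X,v\rangle_g$ with $X=\alpha^\sharp$), integrate over fibres to produce $\rho_t$, $\rho_tu_t$, $S_t$, integrate by parts on $\M$, and evaluate at $t=0$ using $u_0=0$, $S_0=\tau\rho_0g^{-1}$ and $\divg(\rho_0g^{-1})=\gradg\rho_0$. The differences are only cosmetic: you start from the Liouville equation with Poisson brackets and check the covariant form in normal coordinates, whereas the paper differentiates the transport identity $\int Af_t\,\dd\lambda=\int(A\circ\Phi_t)f_0\,\dd\lambda$ along trajectories and uses the covariant Newton equation $\nabla_tv=-\gradg V$ with metric compatibility.
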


\begin{proof}[Proof sketch]
The Hamiltonian flow preserves volume and transports $f_t$, so $\int A f_t\dd\lambda=\int (A\circ\Phi_t)f_0\dd\lambda$ for every observable $A$. Differentiating in $t$ therefore differentiates the trajectory, not the density. Taking $A=\varphi(q)$ produces only $\dd\varphi[v]$. Integrating over the momenta and by parts on $\M$ gives \eqref{eq:continuity}. Taking $A=\langle X,v\rangle_g$ for a vector field $X$ and using $\nabla_tv=-\gradg V$ produces one term quadratic in $v$ and one independent of $v$, yielding $S_t$ and $\rho_t$. An integration by parts turns the former into $\divg S_t$ and yields \eqref{eq:momentum-equation}. For the initial momentum distribution, odd moments vanish and the covariance is $\tau g^{-1}$, whence \eqref{eq:maxwellmoments}. Evaluating \eqref{eq:momentum-equation} at $t=0$, where $u_0=0$, produces \eqref{eq:generalacceleration}. The continuity equation then gives \eqref{eq:firstdensity} and, differentiated once more, \eqref{eq:seconddensity-general}. Finally $V+\tau\log\rho=\tau\log(\rho/\pi_{V,\tau})-\tau\log Z_{V,\tau}$, yields \eqref{eq:implicit-relative-score}--\eqref{eq:implicit-density-direction}. A full derivation is given in Appendix~\ref{app:moments}.
\end{proof}

At the very first instants, the theorem has two levels of interpretation. First, a competition between force and pressure. Indeed, the external force is $-\gradg V$. Initially, $S_0=\tau\rho_0g^{-1}$, whose divergence is $\tau\gradg\rho_0$. Hence $\rho_0\left.\partial_tu_t\right|_0 = -\rho_0\gradg V-\tau\gradg\rho_0$. A potential well does not only pull particles toward its minimum, it must also overcome the pressure that tends to spread a non-uniform base. Note that if $\rho_0$ is uniform, $\gradg\log\rho_0$ vanishes, so the initial motion is caused by the potential alone and mass starts accumulating where its wells are. A base substantially broader than the target is therefore the regime in which the learned landscape can be read as an energy landscape. Second, a relative-score field. If $Z_{V,\tau}<\infty$, the implicit potential profile is $\pi_{V,\tau}$. Equation~\eqref{eq:implicit-relative-score} says that particles accelerate toward regions where this profile is relatively denser than the base. The relevant field is therefore not $-\gradg V$ alone, but $\tau\gradg\log(\pi_{V,\tau}/\rho_0)$.

Finally, Eqs.~\eqref{eq:continuity}--\eqref{eq:momentum-equation} hold at all times. They always describe the same competition but the momenta spread is no longer isotropic and the moment equations stop being a useful description of where the mass is heading. However, one observation motivates the following section. Take a cloud whose positions are distributed as $\rho_1$ and whose momenta follow the isotropic distribution~\eqref{eq:maxwellian}. Its mean velocity is zero, and the pressure exerted by the momenta is $\tau\gradg\rho_1$. The momentum equation~\eqref{eq:momentum-equation} then says that the cloud stays at rest if and only if this pressure is balanced by the potential force, $\tau\gradg\rho_1=-\rho_1\gradg V$: the implicit profile corresponding to the target distribution is thus the unique landscape in which the target sits still and this observation is at the basis of Hamiltonian Monte Carlo \citep{duane1987, betancourt2017}. Nevertheless, the implicit profile is a property of the potential. One should keep in mind that the learned potential may satisfy $\pi_{V_\theta,\tau}\neq\rho_1$ while $\rho_T\approx\rho_1$. This distinction is essential for interpreting RNHF as finite-time Hamiltonian transport rather than as an equilibrium sampler.

\subsection{The interpretable matched potential and exact transport}
\label{sec:matched}

We now introduce the target density $\rho_1$. The matched potential is
\begin{equation}
  V_\star(q)=-\tau\log\rho_1(q).
  \label{eq:matched}
\end{equation}
Its defining property is $\pi_{V_\star,\tau}=\rho_1$. Under the assumptions of Theorem~\ref{thm:general-confinement}, choosing \eqref{eq:matched} gives
\begin{align}
  \left.\partial_tu_t\right|_{t=0}
  &=\tau\gradg\log\frac{\rho_1}{\rho_0},
  \label{eq:relative-score}\\
  \left.\partial_{tt}\rho_t\right|_{t=0}
  &=\tau\divg\!\left(\rho_0\gradg\log\frac{\rho_0}{\rho_1}\right).
  \label{eq:target-density-direction}
\end{align}

For example, if both distributions are centered isotropic Gaussians with base width $b$ and target width $a<b$, then
\begin{equation}
  \left.\partial_tu_t(q)\right|_{0}
  =-\tau\left(\frac{1}{a^2}-\frac{1}{b^2}\right)q,
  \label{eq:broad-gaussian-accel}
\end{equation}
which points inward everywhere except at the mode. The corresponding short-time density expansion
\begin{equation}
  \rho_t
  =\rho_0+\frac{\tau t^2}{2}\,
  \divg\!\left(\rho_0\gradg\log\frac{\rho_0}{\rho_1}\right)
  +O(t^3),
  \label{eq:smalltime-expansion}
\end{equation}
makes the target-relative mechanism explicit: the first visible change of the position density is driven by the divergence of the relative score $\gradg\log(\rho_0/\rho_1)$ weighted by the base. The matched potential is canonical for two additional reasons.

\begin{proposition}[Equilibrium landscape and clock]\label{prop:matched-properties}
The matched potential \eqref{eq:matched} has the following properties for the exact flow.
\begin{enumerate}
\item The lifted target $f_\star(q,p)=\rho_1(q)M_\tau(q,p)$ is invariant under the kernel \eqref{eq:markovkernel} and strict local maxima of $\rho_1$ are strict local minima of $V_\star$.
\item Writing $s=\sigma_pt$ and $w=\dd q/\dd s$, the covariant Newton equation becomes
\begin{equation}
  \frac{\dd q}{\dd s}=w,
  \qquad
  \nabla_s w=\gradg\log\rho_1(q),
  \qquad
  w_0\sim\Ng(0,I).
  \label{eq:scaled-newton}
\end{equation}
Hence, under the matched pair, $\sigma_p$ acts as a phase or integration-time parameter.
\end{enumerate}
\end{proposition}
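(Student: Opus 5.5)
For part 1, I would show that the lifted target is a Gibbs density for the Hamiltonian. With $V_\star=-\tau\log\rho_1$ and the isotropic law \eqref{eq:maxwellian}, we get
\[
f_\star(q,p)=\rho_1(q)M_\tau(q,p)=(2\pi\tau)^{-d/2}\exp\!\Bigl(-\tfrac{V_\star(q)}{\tau}-\tfrac{\normg{p}^2}{2\tau}\Bigr)=(2\pi\tau)^{-d/2}e^{-H(q,p)/\tau}.
\]
Under Assumption~\ref{ass:standing}, the exact flow $\Flow_T$ conserves $H$ and preserves the Liouville volume $\lambda$ of \eqref{eq:liouville}. So for any observable $A$,
\[
\int (A\circ\Flow_T)\,f_\star\,\dd\lambda=\int A\,(f_\star\circ\Flow_{-T})\,\dd\lambda=\int A\,f_\star\,\dd\lambda ,
\]
and $f_\star$ is invariant under $\Flow_T$. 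To pass to the kernel \eqref{eq:markovkernel}, I take $\mu_0=\rho_1\vol$ in \eqref{eq:kernelrep}. The initial phase-space law is then exactly $f_\star\,\dd\lambda$, because the fibre normaliser of $M_\tau$ is independent of $q$. By invariance the final phase-space law is again $f_\star$, and marginalising the momentum gives $\mu_T=\rho_1\vol$. This is the sense in which the lift is invariant: $\rho_1$ is stationary for $\mathsf K_{V_\star,\tau,T}$. The statement about extrema is immediate, since $-\tau\log$ is strictly decreasing and $\tau>0$, so it turns strict local maxima of $\rho_1$ into strict local minima of $V_\star$.

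For part 2, I would first write Hamilton's equations \eqref{eq:hameq} in covariant second-order form. With $v=\dot q=p^\sharp$, the geodesic term combines with the time derivative of $p$ to give $\nabla_t\dot q=-\gradg V$; this is the identity already used in the proof of Theorem~\ref{thm:general-confinement}. Substituting $V_\star$ gives $\nabla_t\dot q=\tau\,\gradg\log\rho_1$. Next I rescale time by $s=\sigma_p t$. Then $w=\dd q/\dd s=\dot q/\sigma_p$, and the chain rule for the covariant derivative along the curve gives $\nabla_s w=\sigma_p^{-2}\nabla_t\dot q=\gradg\log\rho_1(q)$, since $\tau=\sigma_p^2$. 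For the initial condition, $\dot q_0=p_0^\sharp$ with $p_0\sim\Ng(0,\tau I)$. The musical isomorphism is an isometry between orthonormal frame and coframe, so $w_0=p_0^\sharp/\sigma_p\sim\Ng(0,I)$. The rescaled system \eqref{eq:scaled-newton} no longer contains $\sigma_p$, which therefore only sets the clock: the law of $q$ at time $T$ is the law of the scaled solution at $s=\sigma_pT$.

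The main obstacle is in part 1: justifying the change of variables for a flow that is only assumed to exist for almost every initial condition. I would handle it by restricting to the full-measure invariant set where $\Flow_T$ is a volume-preserving diffeomorphism, relying on Assumption~\ref{ass:standing}. A second point to watch is that $e^{-H/\tau}$ is normalisable; this holds because $\int\rho_1\dvol=1$ and $M_\tau$ integrates to one on each fibre.
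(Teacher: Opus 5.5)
Your proposal is correct and follows the paper's proof. For part 1, the paper simply cites the standard HMC fact that $f_\star\propto e^{-H/\tau}$ is preserved by the volume-preserving, energy-conserving flow; you spell out that Gibbs argument, including the marginalization showing $\rho_1$ is stationary for $\mathsf K_{V_\star,\tau,T}$. For part 2, you use the same covariant Newton equation $\nabla_t\dot q=\tau\gradg\log\rho_1$ and substitution $s=\sigma_p t$, $\dot q=\sigma_p w$, and you additionally check that $w_0=p_0^\sharp/\sigma_p\sim\Ng(0,I)$, a detail the paper leaves implicit.
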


\begin{proof}
The first statement is standard within the HMC literature \citep{neal2011}. For the last statement, the Hamiltonian equations are equivalent to the covariant Newton equation $\nabla_t\dot q=-\gradg V_\star=\tau\gradg\log\rho_1$. Substituting $s=\sigma_pt$ and $\dot q=\sigma_pw$ gives \eqref{eq:scaled-newton}.
\end{proof}

The matched potential is thus highly interpretable since it encodes the energetic landscape of the target density $\rho_1$. However, equilibrium invariance does not explain why any given base should reach the target at finite time $T$. The answer is exact in the Gaussian case, where the flow maps every position base onto the target after a quarter period. Consider $\M=\mathbb R^d$ and the isotropic Gaussian target $\rho_1=\mathcal N(m,a^2I_d)$, with $a>0$. Its matched potential is quadratic,
\begin{equation}
  V_\star(q)=\frac{\tau}{2a^2}\lVert q-m\rVert^2+C,
  \label{eq:quadratic-potential}
\end{equation}
so the Hamiltonian dynamics is a harmonic oscillator. \cite{holderrieth2024} already showed that a Euclidean harmonic oscillator Gaussianizes an arbitrary position distribution after a quarter period, an idea underlying their Oscillation Hamiltonian Generative Flows. We express the same phase-space rotation here to identify explicitly where the information removed from position is stored. We then extend the refocusing interpretation beyond the globally quadratic setting through a local Riemannian harmonic analysis around non-degenerate modes.

\begin{theorem}[Exact quarter-period transport]\label{thm:gaussian-refocusing}
Let $Q_0$ have an arbitrary probability distribution on $\mathbb R^d$, let $P_0\sim\mathcal N(0,\tau I_d)$, and evolve the pair under the Hamiltonian flow associated with \eqref{eq:quadratic-potential}. Then:
\begin{equation}
  Q_T
  =m+\cos(\omega T)(Q_0-m)+a\sin(\omega T)Z,
  \qquad T \ge 0, \ Z\sim\mathcal N(0,I_d),
  \label{eq:gaussian-solution}
\end{equation}
with $Z$ independent of $Q_0$ and $\omega=\frac{\sigma_p}{a}$. In particular, whenever
\begin{equation}
  \omega T=\frac{\pi}{2}+k\pi,
  \qquad k\in\mathbb Z,
  \label{eq:quarterperiod}
\end{equation}
we have
\begin{equation}
  Q_T\sim\mathcal N(m,a^2I_d)=\rho_1
  \label{eq:exact-gaussian-transport}
\end{equation}
for \emph{every} initial position law. For a unit-time evolution, the first refocusing scale is
\begin{equation}
  \sigma_p^\star=\frac{\pi}{2}a.
  \label{eq:optimal-sigma}
\end{equation}
\end{theorem}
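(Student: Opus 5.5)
The plan is to solve Hamilton's equations \eqref{eq:hameq} in closed form. With $G=I$ on $\mathbb R^d$ and $V_\star$ given by \eqref{eq:quadratic-potential}, they read $\dot q=p$ and $\dot p=-\tfrac{\tau}{a^2}(q-m)$. Writing $x=q-m$, this is the linear system $\ddot x=-\omega^2x$ with $\omega^2=\tau/a^2=\sigma_p^2/a^2$, so $\omega=\sigma_p/a$. The unique solution is
\[
x_T=\cos(\omega T)\,x_0+\frac{\sin(\omega T)}{\omega}\,P_0,
\]
which exists for all $T\ge0$. The flow is therefore an explicit rotation in the rescaled phase-space coordinates $(x,p/\omega)$. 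The constant $C$ plays no role, since only $\dd V_\star$ enters the dynamics.

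Next, I rewrite the momentum term. Since $P_0\sim\mathcal N(0,\tau I_d)$, set $Z:=P_0/\sigma_p\sim\mathcal N(0,I_d)$. Then $P_0/\omega=(a/\sigma_p)P_0=aZ$. Independence of $Z$ from $Q_0$ is inherited from the product structure of the lifted base $f_0=\rho_0 M_\tau$: in flat space $M_\tau$ does not depend on $q$, so $P_0$ is independent of $Q_0$. Substituting gives \eqref{eq:gaussian-solution}.

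At $\omega T=\pi/2+k\pi$ we have $\cos(\omega T)=0$ and $\sin(\omega T)=\pm1$. Hence $Q_T=m\pm aZ$, and by the symmetry of $\mathcal N(0,I_d)$ this is distributed as $\mathcal N(m,a^2I_d)$ whatever the law of $Q_0$, which gives \eqref{eq:exact-gaussian-transport}. For $T=1$ and $k=0$, the condition $\sigma_p/a=\pi/2$ yields \eqref{eq:optimal-sigma}. It is the smallest positive solution, since the admissible $\sigma_p=a(\pi/2+k\pi)$ are increasing in $k\ge0$.

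There is no real obstacle here. The only point to state explicitly is the independence of $P_0$ and $Q_0$. It follows because the isotropic momentum law \eqref{eq:maxwellian} has a position-independent normalization and, on $\mathbb R^d$ with the Euclidean metric, a position-independent covariance. Without this independence the sum $\cos(\omega T)(Q_0-m)+a\sin(\omega T)Z$ could not be read as a mixture that is Gaussian at the quarter period. I would also remark that the information about $Q_0$ is not lost: at the quarter period the exact solution gives $P_T=\mp\omega(Q_0-m)$, so it is carried entirely by the final momentum.
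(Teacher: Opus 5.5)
Your proposal is correct and matches the paper's proof: solve the linear oscillator $\ddot x=-\omega^2x$ explicitly, substitute $P_0=\sigma_pZ$ with $\sigma_p/\omega=a$, and read off $Q_T=m\pm aZ$ at the quarter-period phases. You also justify explicitly that $P_0$ is independent of $Q_0$, using the product form $f_0=\rho_0M_\tau$ of the lifted base; the paper leaves this implicit, and it is worth keeping.
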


\begin{proof}
The proof can be found in Appendix~\ref{app:gaussian}. 
\end{proof}

The result says that any base can be mapped exactly to the target position marginal in one finite-time pass. In normalized coordinates $x=\frac{q-m}{a}$, $y=\frac{p}{\sigma_p}$, the flow is a rotation,
\begin{equation}
  \begin{pmatrix}x_T\\y_T\end{pmatrix}
  =
  \begin{pmatrix}
  \cos(\omega T)&\sin(\omega T)\\
  -\sin(\omega T)&\cos(\omega T)
  \end{pmatrix}
  \begin{pmatrix}x_0\\y_0\end{pmatrix}.
  \label{eq:phase-rotation}
\end{equation}
At a quarter period, $x_T=y_0$ and $y_T=-x_0$: all information about the position base has been transferred to the momentum and vice-versa. A visualization is proposed in Appendix~\ref{app:gaussian}. For a general target, the same mechanism survives locally around each mode.

\begin{theorem}[Local harmonic approximation]\label{thm:local-lens}
Let $m\in\M$ be a non-degenerate local maximum of $\rho_1$, and define the positive-definite self-adjoint endomorphism $A_m:T_m\M\to T_m\M$ by
\begin{equation}
  \langle A_m\xi,\eta\rangle_g
  =-\Hess\log\rho_1(m)[\xi,\eta].
  \label{eq:local-precision}
\end{equation}
In Riemannian normal coordinates $x=\log_m(q)$, the matched potential satisfies
\begin{equation}
  V_\star(\exp_mx)
  =V_\star(m)+\frac{\tau}{2}\langle x,A_mx\rangle+O(\tau\lVert x\rVert^3).
  \label{eq:local-potential}
\end{equation}
Consequently, the linearized position dynamics reads
\begin{equation}
  \ddot x+\tau A_mx=0.
  \label{eq:local-oscillator}
\end{equation}
If $A_me_r=\lambda_re_r$, the $r$-th normal mode oscillates with frequency
\begin{equation}
  \omega_r=\sigma_p\sqrt{\lambda_r}.
  \label{eq:local-frequency}
\end{equation}
\end{theorem}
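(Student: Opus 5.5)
The argument reduces to a second-order Taylor expansion of $\log\rho_1$ in normal coordinates, followed by linearizing the covariant Newton equation of Proposition~\ref{prop:matched-properties} around the equilibrium $(m,0)$.

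\emph{Step 1: the potential.} Set $\ell:=\log\rho_1\circ\exp_m$, a smooth function on a neighbourhood of $0\in T_m\M$. Since $m$ is a critical point of $\log\rho_1$, we have $\dd\ell(0)=0$. Along radial geodesics $\gamma(s)=\exp_m(sx)$ we have $\nabla_{\dot\gamma}\dot\gamma=0$, so
\[
\frac{\dd^2}{\dd s^2}\log\rho_1(\gamma(s))=\Hess\log\rho_1(\gamma(s))[\dot\gamma,\dot\gamma].
\]
At $s=0$ this gives $\dd^2\ell(0)[x,x]=\Hess\log\rho_1(m)[x,x]$. In general the coordinate Hessian of $\ell$ at $0$ equals the Riemannian Hessian at $m$, because the Christoffel symbols vanish at the origin of normal coordinates; the term $\Gamma\,\partial\ell$ would vanish there anyway since $\dd\ell(0)=0$. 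Taylor's theorem with a smooth remainder then gives
\[
\ell(x)=\ell(0)-\tfrac12\langle x,A_mx\rangle+O(\lVert x\rVert^3).
\]
Multiplying by $-\tau$ and using \eqref{eq:matched} yields \eqref{eq:local-potential}. Here $\langle\cdot,\cdot\rangle=g_m$ is the Euclidean inner product in normal coordinates, since $g_{ij}(0)=\delta_{ij}$ in an orthonormal frame.

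\emph{Step 2: the linearized dynamics.} By Proposition~\ref{prop:matched-properties}, trajectories satisfy $\nabla_t\dot q=\tau\gradg\log\rho_1(q)$. In normal coordinates this reads
\[
\ddot x^k+\Gamma^k_{ij}(x)\dot x^i\dot x^j=\tau\,g^{kl}(x)\,\partial_l\ell(x).
\]
We linearize about the equilibrium $x=0,\ \dot x=0$, which is an equilibrium because $\dd\ell(0)=0$. The Christoffel term is quadratic in $\dot x$, so it drops at first order. The metric satisfies $g^{kl}(x)=\delta^{kl}+O(\lVert x\rVert^2)$. Step~1 gives $\partial_l\ell(x)=-(A_mx)_l+O(\lVert x\rVert^2)$. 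Keeping first-order terms in $(x,\dot x)$ yields \eqref{eq:local-oscillator}. Equivalently, this is the Hamiltonian $\tfrac12\lVert p\rVert^2+\tfrac\tau2\langle x,A_mx\rangle$, i.e.\ the quadratic part of $H$ at $(m,0)$.

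\emph{Step 3: the normal modes.} $A_m$ is self-adjoint and positive definite with respect to $g_m$, by symmetry of the Hessian and non-degeneracy of the maximum. It therefore has a $g_m$-orthonormal eigenbasis $e_r$ with eigenvalues $\lambda_r>0$. Writing $x=\sum_r x_re_r$ decouples \eqref{eq:local-oscillator} into the scalar equations $\ddot x_r+\tau\lambda_rx_r=0$, whose frequency is $\sqrt{\tau\lambda_r}=\sigma_p\sqrt{\lambda_r}$. This is \eqref{eq:local-frequency}. As a consistency check, in the Gaussian case $A_m=a^{-2}I$, which recovers $\omega=\sigma_p/a$ from Theorem~\ref{thm:gaussian-refocusing}.

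The main obstacle is the care needed in Step~2 to justify that curvature contributes nothing at linear order. The point is that the metric is $\delta+O(\lVert x\rVert^2)$ and the Christoffel term is quadratic in velocity, so both effects only enter at the order of the remainder $O(\tau\lVert x\rVert^3)$ in the potential (and its second-order counterpart in the equations of motion). Done this way, the linearization is intrinsic at $m$.
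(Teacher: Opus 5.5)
Your proposal is correct and follows the same route as the paper's proof. You Taylor-expand $\log\rho_1$ in normal coordinates (vanishing gradient at the mode, Hessian $-A_m$), linearize the covariant Newton equation $\nabla_t\dot q=\tau\gradg\log\rho_1$ at $(m,0)$, and diagonalize $A_m$ to obtain $\omega_r=\sqrt{\tau\lambda_r}=\sigma_p\sqrt{\lambda_r}$, while spelling out what the paper leaves implicit: the coordinate Hessian equals the Riemannian one at the origin, and neither the velocity-quadratic Christoffel term nor the correction $g^{kl}=\delta^{kl}+O(\lVert x\rVert^2)$ contributes at linear order.
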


\begin{proof}
Taylor-expand $\log\rho_1$ in normal coordinates. Its gradient vanishes at the mode, and its Hessian is $-A_m$. Multiplying by $-\tau$ gives \eqref{eq:local-potential}. The covariant Newton equation $\nabla_t\dot q=-\gradg V_\star$ reduces at first order in normal coordinates to \eqref{eq:local-oscillator} and diagonalizing $A_m$ gives \eqref{eq:local-frequency}.
\end{proof}

This proposition explains how the potential chooses the destination, the curvature chooses the width and frequency and the momentum scale chooses when the base has been transferred to the momentum. Note that the trained network is not constrained to equal $V_\star$ exactly. It can learn finite-time corrections that compensate for unequal modal widths, anharmonicity, the prescribed time horizon, the variational objective, and leapfrog discretization. More generally, the results do not prove that every pair $(\rho_0,\rho_1)$ can be connected exactly by an autonomous scalar potential and one global momentum scale. A short discussion on exact transport with the matched potential is provided in Appendix~\ref{app:matched-exact}. Note that such a universality statement could be achieved for instance by considering a Hamiltonian linear in $p$, yielding a continuous normalizing flows in disguise \citep{falorsi2020, holderrieth2024} at the price of reduced interpretability.

\subsection{What separates the learned potential from the matched one}
\label{sec:memory}

Section~\ref{sec:matched} introduced the matched potential $V_\star=-\tau\log\rho_1+C$ and Theorem~\ref{thm:gaussian-refocusing} showed that in the Gaussian case a quarter-period pass reaches the target from any base, but nothing tells us what RNHF actually learns, nor what one should do to bring it closer to the interpretable $V_\star$. Note that our goal is not to force the learned potential to equal the matched one, since the latter may not be optimal for transporting the base onto the target, but to identify the conditions under which the learned landscape stays close enough to it to be read as an energy landscape. In \cite{souveton2024}, the authors notice that using a broad position base in a Euclidean space yields an interpretable learned potential close to the negative logarithm of the target distribution. In this section, we further investigate this observation. Since the flow is reversible and conserves energy, a pass can be undone exactly, so the output density can be written in closed form in terms of the potential and of the base as seen from the arrival point. Let $Q_{-T}(q,p)$ denote the starting position of a particle that has arrived at $(q,p)$ after time $T$.

\begin{theorem}
  \label{thm:lensing}
  Suppose $Z_{V,\tau}<\infty$, and let the initial phase-space distribution be $f_0=\rho_0M_\tau$. Then:
  \begin{equation}
    \rho_T\;=\;\pi_{V,\tau}\,\bar R_T,
    \qquad
    \bar R_T(q) := \mathbb E_{P\sim\Ng(0,\tau I)}
    \left[\frac{\rho_0}{\pi_{V,\tau}}\bigl(Q_{-T}(q,P)\bigr)\right],
    \qquad T \ge 0,
    \label{eq:lensing}
  \end{equation}
  and consequently, with $V_\star$ the matched potential of the target $\rho_1$,
  \begin{equation}
    V\;-\;V_\star\;=\;\underbrace{\tau\log\bar R_T}_{\text{residual memory of the base}}
    \;-\;\underbrace{\tau\log\frac{\rho_T}{\rho_1}}_{\text{bias of the model}}
    \;+\;\mathrm{cst}.
    \label{eq:matched-gap}
  \end{equation}
\end{theorem}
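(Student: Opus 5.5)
The plan is to write the phase-space density at time $T$ explicitly using Liouville's theorem and energy conservation, then marginalise over momenta. The exact flow $\Flow_T$ preserves the Liouville measure $\lambda$ of \eqref{eq:liouville} and is invertible. Hence the transported density is $f_T=f_0\circ\Flow_{-T}$ with no Jacobian factor. Writing $(Q_{-T},P_{-T})=\Flow_{-T}(q,p)$, we have
\begin{equation*}
  f_T(q,p)=\rho_0(Q_{-T})\,M_\tau(Q_{-T},P_{-T}).
\end{equation*}

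The key step is to rewrite $\rho_0 M_\tau$ so that the only non-conserved factor is $\rho_0/\pi_{V,\tau}$. Since $\pi_{V,\tau}=Z_{V,\tau}^{-1}e^{-V/\tau}$, we have
\begin{equation*}
  \rho_0(q)M_\tau(q,p)=\frac{\rho_0}{\pi_{V,\tau}}(q)\,\frac{e^{-H(q,p)/\tau}}{Z_{V,\tau}(2\pi\tau)^{d/2}}.
\end{equation*}
Energy conservation gives $H\circ\Flow_{-T}=H$. Therefore
\begin{equation*}
  f_T(q,p)=\frac{\rho_0}{\pi_{V,\tau}}(Q_{-T}(q,p))\;\pi_{V,\tau}(q)\,M_\tau(q,p).
\end{equation*}

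Integrating over $T_q^*\M$ against $\dd\nu_q$ recovers $\rho_T(q)$ as in \eqref{eq:moments}. The factor $\pi_{V,\tau}(q)$ comes out of the integral. The remaining integral against $M_\tau(q,\cdot)\,\dd\nu_q$ is exactly the expectation over $P\sim\Ng(0,\tau I)$ in $T_q^*\M$, because $M_\tau(q,\cdot)$ is normalised (the convention after \eqref{eq:maxwellian}). This yields \eqref{eq:lensing}.

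For \eqref{eq:matched-gap}, take $\tau\log$ of \eqref{eq:lensing}:
\begin{equation*}
  \tau\log\rho_T=-V-\tau\log Z_{V,\tau}+\tau\log\bar R_T.
\end{equation*}
By \eqref{eq:matched}, $\tau\log\rho_1=-V_\star$. Subtracting the two identities gives
\begin{equation*}
  V-V_\star=\tau\log\bar R_T-\tau\log\frac{\rho_T}{\rho_1}+\mathrm{cst},
\end{equation*}
with the constant equal to $-\tau\log Z_{V,\tau}$, together with any additive constant in $V_\star$.

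The only delicate point is measure-theoretic. Assumption~\ref{ass:standing} gives existence of the flow only for almost every initial condition. Volume preservation must therefore be used in its weak form, $\int A\,f_T\,\dd\lambda=\int (A\circ\Flow_T)\,f_0\,\dd\lambda$, to identify $f_T$ with $f_0\circ\Flow_{-T}$ $\lambda$-a.e. This makes $\rho_T$ and $\bar R_T$ well defined for almost every $q$, via Fubini on the product structure \eqref{eq:liouville}. Finiteness of $\bar R_T$ is automatic, since its integral against $\pi_{V,\tau}$ equals the total mass $1$.
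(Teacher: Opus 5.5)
Your proof is correct and follows the paper's argument: pull back $f_0$ by the volume-preserving inverse flow, use energy conservation to eliminate the backward kinetic energy, marginalise against $M_\tau(q,\cdot)\,\dd\nu_q$, and take logarithms. The only difference is cosmetic: you factor $\rho_0M_\tau$ through the Gibbs weight $e^{-H/\tau}$ before pulling back, whereas the paper substitutes $K\circ\Flow_{-T}=K+V-V(Q_{-T})$ directly and cancels the $Z_{V,\tau}$ factors afterwards.
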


\begin{proof}
The flow preserves the Liouville volume and transports the density, so $f_T=f_0\circ\Phi_{-T}$, that is $f_T(q,p)=\rho_0(Q_{-T})\,M_\tau(\Phi_{-T}(q,p))$ with $Q_{-T}=Q_{-T}(q,p)$. The initial momentum distribution ~\eqref{eq:maxwellian} is a function of the kinetic energy alone so the second factor is $(2\pi\tau)^{-d/2}\exp[-K(\Phi_{-T}(q,p))/\tau]$. Since the Hamiltonian is conserved:
\[
  K\bigl(\Phi_{-T}(q,p)\bigr)\;=\;K(q,p)+V(q)-V(Q_{-T}).
\]
Substituting this into the exponential gives
\[
  f_T(q,p)\;=\;M_\tau(q,p)\;e^{-V(q)/\tau}\;
  \rho_0(Q_{-T})\,e^{V(Q_{-T})/\tau}.
\]
Integrating over the momentum, where $M_\tau(q,\cdot)\,\mathrm d\nu_q$ is $\Ng(0,\tau I)$, yields
\[
  \rho_T(q)\;=\;e^{-V(q)/\tau}\;
  \mathbb E_{P\sim\Ng(0,\tau I)}\Bigl[\rho_0(Q_{-T})\,e^{V(Q_{-T})/\tau}\Bigr].
\]
Writing $e^{-V/\tau}=Z_{V,\tau}\pi_{V,\tau}$ outside the expectation and $e^{V/\tau}=(Z_{V,\tau}\pi_{V,\tau})^{-1}$ inside it, the normalising constants cancel and leave~\eqref{eq:lensing}. Taking logarithms gives $V=-\tau\log\rho_T+\tau\log\bar R_T+\mathrm{cst}$, and subtracting $V_\star$ yields~\eqref{eq:matched-gap}.
\end{proof}

The second term of~\eqref{eq:matched-gap} is reduced by training. Assume it negligible, so that $V-V_\star=\tau\log\bar R_T+\mathrm{cst}$: the learned potential is the matched one exactly when the backward average does not depend on the arrival point. In~\eqref{eq:lensing} the only such dependence is through the law of the departure point $Q_{-T}(q,P)$, since the integrand $\rho_0/\pi_{V,\tau}$ is a fixed function. Constancy therefore means that every arrival point is fed by the same population of departures. An unbiased model whose position base has been entirely transferred to the momentum has $V=V_\star$ whatever $\rho_0$ may be. However, training only requires $\rho_T=\rho_1$. This is why the width of the position base matters because it limits the possibilities. Since the flow is reversible, any part of the initial position that is not transferred to momentum remains in the output: if a fraction $\chi$ of it survives the pass, the base contributes about $\chi b$ to the spread of $\rho_T$, where $b$ is the extent of $\rho_0$. Matching a target of extent $a$ therefore forces $\chi\lesssim a/b$. When the base is much broader than the target, no solution leaving part of the base in the position marginal can reproduce $\rho_1$ at all, since the surviving residue would spill outside it. The model is driven to transfer the base because it is the only way to meet the loss. In this case, $\bar R_T$ is nearly constant, and $V$ is close to $V_\star$.

\section{Numerical experiments}
\label{sec:experiments}

We evaluate RNHF on three controlled two-dimensional model spaces: the Euclidean plane $\mathbb R^2$, the hyperbolic plane $\mathbb H^2$, and the sphere $\mathbb S^2$. On each space we compare with a RCNF built from the same base distribution and the same geometric primitives. The purpose of these experiments is to test three concrete questions: can RNHF recover multimodal targets on different curvatures, are its sampling quality and cost competitive with an intrinsic likelihood-based flow, and are the learned potentials close to the matched potentials?

\subsection{Setting}

Details of the RCNF implementation are given in Appendix~\ref{app:rcnf}. RCNF is trained by maximum likelihood and evaluates a Riemannian divergence at every integration step. RNHF avoids this divergence but optimizes the variational bound \eqref{eq:elbo}. Both models have approximately 34k parameters, but RNHF learns a potential, a momentum encoder, and the global momentum scale, whereas RCNF learns only a velocity field. Both models are trained for 120 epochs and optimized with the Adam algorithm with learning rate $10^{-3}$, minibatches of 256, and a fixed training set of 6,000 samples. RNHF uses $L=30$ geodesic leapfrog steps and RCNF integrates over the same interval $[0,1]$ with the same number of second-order geodesic midpoint steps. 

We report six sample-based criteria: Kullback--Leibler (KL) divergence, ambient mean discrepancy, ambient covariance discrepancy, energy distance based on geodesic distances, and sample precision and recall. Precise definitions are given in Appendix~\ref{app:metrics}. 

On $\mathbb R^2$, the base is the centred Gaussian $\mathcal N(0,1.4^2I_2)$ and the target is a bimodal Gaussian mixture whose modes are symmetric about the origin with unequal weights,
\[
  \rho_{\mathrm{data}}(x)
  =\tfrac35\mathcal N(x;m_1,\sigma_{\mathrm{data}}^2I_2)
   +\tfrac25\mathcal N(x;m_2,\sigma_{\mathrm{data}}^2I_2),
  \qquad m_{1,2}=(\pm2,0),\quad \sigma_{\mathrm{data}}=0.5 .
\]
On $\mathbb H^2$, the base is a wrapped Gaussian at the origin of scale $0.8$ and the target is a mixture of two wrapped Gaussians whose modes are symmetric about the origin with unequal weights. In the tangent space at the origin, with $a_{1,2}=(\mp0.75,0)$ and $\sigma_{\mathrm{data}}=0.2$,
\[
  X\sim\tfrac35\mathcal N(a_1,\sigma_{\mathrm{data}}^2I_2)
      +\tfrac25\mathcal N(a_2,\sigma_{\mathrm{data}}^2I_2),
  \qquad Q=\exp_o(X),
\]
On $\mathbb S^2$, the base is uniform and the target is a two-component von Mises--Fisher mixture with unequal weights. Calling $\mu_1$ the north pole and $\mu_2$ the location at a polar angle of $110^\circ$ in the $xz$-plane, 
\[
  \rho_{\mathrm{data}}(q)
  =\tfrac35\mathrm{vMF}(q;\mu_1,\kappa)
   +\tfrac25\mathrm{vMF}(q;\mu_2,\kappa),
  \qquad \kappa=20,
\]

\subsection{Sampling performance and computational cost}

Figure~\ref{fig:samples} shows the kernel density estimations (KDE) of 2,000 generated points for each model. Both methods recover the two modes on all three geometries with the correct weights: 0.6 for the left mode, 0.4 for the right mode. 

\begin{figure}[!ht]
    \centering
    \includegraphics[width=0.8\textwidth]{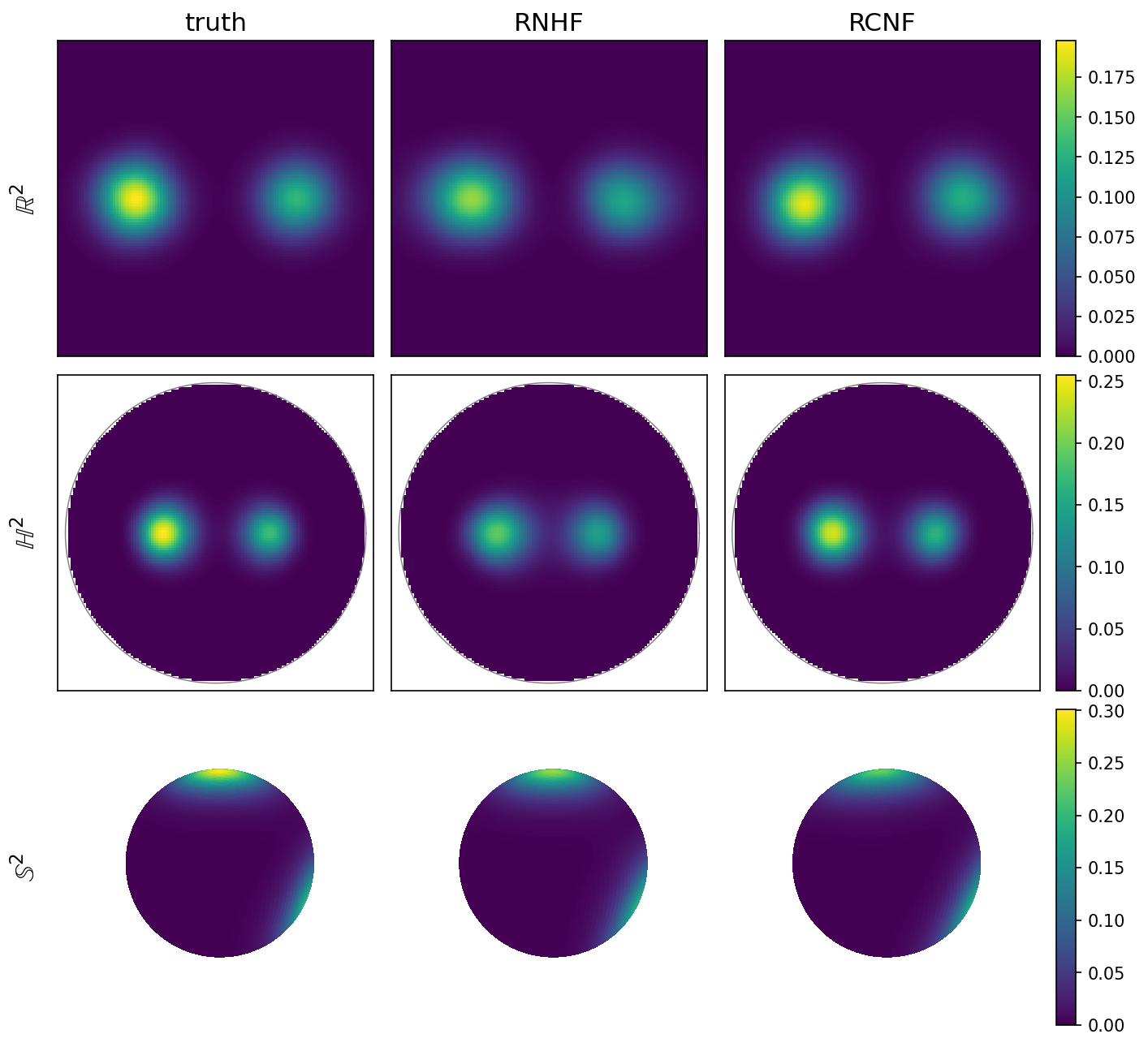}
    \caption{Ground-truth and generated KDE of the samples on the three controlled model spaces. Rows correspond to $\mathbb R^2$, $\mathbb H^2$ (visualized in the Poincare disk), and $\mathbb S^2$. Columns show the target, RNHF, and RCNF.}
    \label{fig:samples}
\end{figure}

Table~\ref{tab:results} gives a more nuanced picture. On $\mathbb{R}^2$, RNHF is better on four of the six criteria, RCNF taking the sample-based KL divergence and the precision. On $\mathbb{H}^2$ this trend reverses: RCNF leads on most metrics, and the largest RNHF discrepancy is the energy distance, which is sensitive to a small number of far-from-origin samples in a geometry whose volume expands exponentially with radius. On $\mathbb{S}^2$, where no such volume growth occurs, RNHF is again ahead on five of the six criteria, RCNF being slightly better only on recall. Overall these results support competitiveness on the controlled tasks, and a clear RNHF advantage in positive curvature, without establishing a uniform ordering across geometries.

\begin{table}[t]
\centering
\caption{Sample-based evaluation on the three model spaces, in the chart-free space. Arrows indicate the preferred direction and the better of the two models is shown in bold.}
\label{tab:results}
\begin{tabular}{l cc cc cc}
\toprule
 & \multicolumn{2}{c}{$\mathbb{R}^2$} & \multicolumn{2}{c}{$\mathbb{H}^2$} & \multicolumn{2}{c}{$\mathbb{S}^2$} \\
\cmidrule(lr){2-3}\cmidrule(lr){4-5}\cmidrule(lr){6-7}
Metric & RNHF & RCNF & RNHF & RCNF & RNHF & RCNF \\
\midrule
$\widehat{\mathrm{KL}}(\rho_1\|p_\theta)$ $\downarrow$ & 0.0984 & \textbf{0.0400} & 0.1318 & \textbf{0.0514} & \textbf{0.0591} & 0.1034 \\
Mean $\ell_2$ $\downarrow$ & \textbf{0.0213} & 0.0462 & 0.0176 & \textbf{0.0172} & \textbf{0.0451} & 0.0866 \\
Cov.\ Frobenius $\downarrow$ & \textbf{0.0887} & 0.1976 & 0.2387 & \textbf{0.0222} & \textbf{0.0261} & 0.0534 \\
Energy dist.\ $\downarrow$ & \textbf{0.0027} & 0.0030 & 0.0033 & \textbf{0.0009} & \textbf{0.0035} & 0.0120 \\
Precision $\uparrow$ & 0.8535 & \textbf{0.8845} & 0.8175 & \textbf{0.8760} & \textbf{0.8380} & 0.8125 \\
Recall $\uparrow$ & \textbf{0.9035} & 0.8930 & \textbf{0.9070} & 0.8975 & 0.9100 & \textbf{0.9245} \\
\bottomrule
\end{tabular}
\end{table}

The computational difference follows directly from the two objectives. RNHF differentiates a scalar potential to obtain a single force vector, and preserves phase-space volume structurally, so its likelihood carries no divergence term.
RCNF instead differentiates a vector field and evaluates the trace of its Jacobian. Computed exactly, this trace costs one directional derivative per frame direction, i.e.\ $d$ backward passes per integration step. Hutchinson estimation can lower this cost in higher dimensions, at the price of gradient variance. Sampling reverses the balance: RNHF carries both position and momentum and applies kick--drift--kick steps, whereas RCNF integrates a single field. Under a matched protocol, a training step is $2.2\times$ faster for RNHF, a decisive advantage, since training dominates the overall computational budget. At inference the ordering reverses and sampling is about $1.3\times$ slower for RNHF, but the absolute cost stays below a millisecond per generated point on every geometry, making it essentially free. This last point also distinguishes RNHF from diffusion models, whose generation requires simulating a long stochastic chain rather than a single deterministic flow.

\subsection{Interpretability of the learned dynamics}
\label{sec:interpretability}

RNHF parameterizes $V_\theta=-\sigma_p^2\log U_\theta$, with $U_\theta>0$. Therefore its implicit profile is
\[
  \pi_{V_\theta,\tau}(q)
  =\frac{U_\theta(q)}{\int_{\M}U_\theta\dvol}.
\]
The general theorem applies directly to this learned profile, without assuming that it is the target. The matched landscape is the special case $\pi_{V_\theta,\tau}=\rho_1$. We begin with a unimodal Gaussian target for which Theorem~\ref{thm:gaussian-refocusing} is exact. We train a RNHF from scratch on $\rho_1=\mathcal N(0,a^2I_2)$ with $a=0.7$, starting from a broad Gaussian position base with $\sigma_b=1.5$. The trained model recovers the predictions of the theorem. The learned momentum scale converges to $\sigma_p=1.10$ against the predicted $\sigma_p^\star=\frac{\pi}{2}a=1.10$. Figure~\ref{fig:learned_lens} shows that the phase-space mechanism itself survives learning. The colour gradient, which tracks the initial position, rotates from horizontal to vertical as the flow proceeds. All the information carried by the base has moved into momentum, and the position marginal is the target.

\begin{figure}[!ht]
  \centering
  \includegraphics[width=0.85\textwidth]{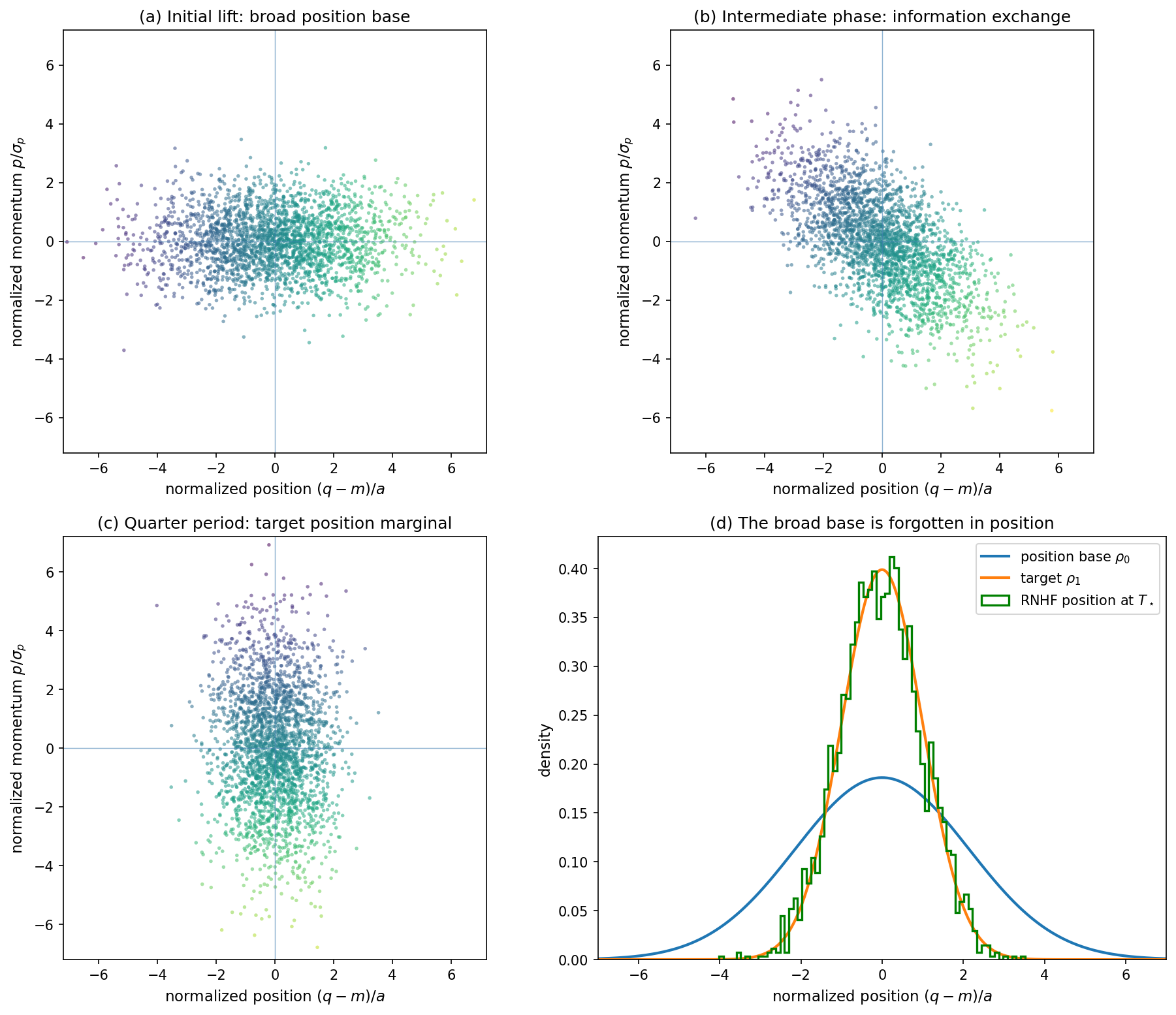}
  \caption{Learned finite-time phase-space rotation for a one-dimensional Gaussian target, shown in normalized phase-space coordinates. The position base is deliberately broader than the target. Colors encode the initial position. (a) Initially, the phase-space cloud is elongated along position. (b) The learned flow rotates the cloud and creates position--momentum correlation. (c) At a quarter period, the position coordinate has the target law, while the colors reveal that the base information is now stored in momentum. (d) The final model position marginal matches the target.}
  \label{fig:learned_lens}
\end{figure}

Next, Figure~\ref{fig:potentials} shows the learned potential on each model space. In all three cases, the two principal wells align with the target modes. This is the behavior discussed at the end of Section~\ref{sec:matched}: the minima choose the destinations, and their curvature controls the local refocusing frequencies. We quantify the agreement on the evaluation domain used for each plot. The first score is the Pearson correlation between $\log U_\theta$ and $\log\rho_1$ after centering, which removes the arbitrary additive constant in the potential. The second is the slope $\alpha$ in the regression $\log U_\theta=\alpha\log\rho_1+\beta$. The matched-pair prediction is a correlation and a slope of one between the learned $\log U_\theta$ and the target $\log \rho_1$. The observed shape correlations are $0.96$ on $\mathbb{R}^2$, $0.91$ on $\mathbb{H}^2$, and $0.96$ on $\mathbb{S}^2$, with slopes $0.89$, $0.83$, and $0.98$ respectively. The learned landscapes therefore recover the matched log-density closely in both shape and amplitude, with $\mathbb{H}^2$ sligthly worse the other two geometries. This gap reflects a deliberate trade-off in the choice of position base. As noticed in Section~\ref{sec:memory}, a broad position base improves the agreement with the matched landscape. However, on $\mathbb{H}^2$, a broad base also places mass far from the origin, where the exponential volume growth turns a handful of samples into outliers that degrade the sampling metrics. We therefore use a moderate base variance, trading a small amount of matched-potential fidelity for cleaner samples. To be more specific, we measured that raising the position base standard deviation from 0.8 to 1.0 lifts the slope to 0.94. On $\mathbb{R}^2$ and $\mathbb{S}^2$, where no such volume growth penalises far samples, a broader base can be used for further constraining the interpretability of the learned potential. 

\begin{figure}[!ht]
    \centering
    \includegraphics[width=0.95\textwidth]{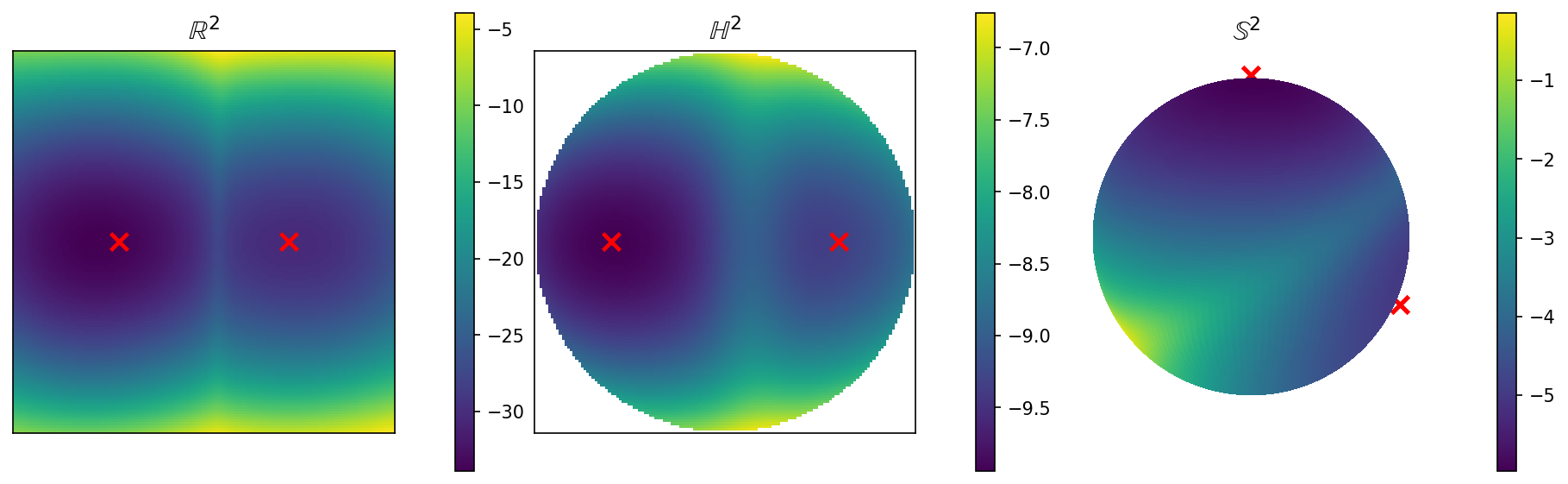}
    \caption{Potentials learned by RNHF on $\mathbb R^2$, $\mathbb H^2$, and $\mathbb S^2$. Red crosses mark the target mode centers. In each geometry the wells of the learned potential align with the modes, and the deeper well corresponds to the higher-weight mode ($0.6$ vs $0.4$) The plots support the matched-potential interpretation, even if the ELBO does not force exact equality with $-\sigma_p^2\log\rho_1$.}
    \label{fig:potentials}
\end{figure}

Figure~\ref{fig:refocusing} illustrates the role of the momentum scale $\sigma_p$ using the analytic matched potential $V=-\sigma_p^2\log\rho_1$. Starting from the position base used in the experiments, we draw a Gaussian momentum of scale $\sigma_p$, integrate the Hamiltonian flow for $T=1$, and measure the KL divergence between the resulting target and the learned distribution. When $\sigma_p$ is too small, the dynamics is too slow, whereas when it is too large, the trajectories overshoot and disperse, so the distance to the target is minimised at an intermediate value. On the three spaces, the energy distance exhibits some oscillatory behavior that is consistent with the harmonic analysis of Section~\ref{sec:matched}. The key observation is that the optimum of each curve coincides with the momentum scale that our RNHF models learn during training, even though nothing in the loss prescribes it explicitly.

\begin{figure}[!ht]
    \centering
    \includegraphics[width=0.95\textwidth]{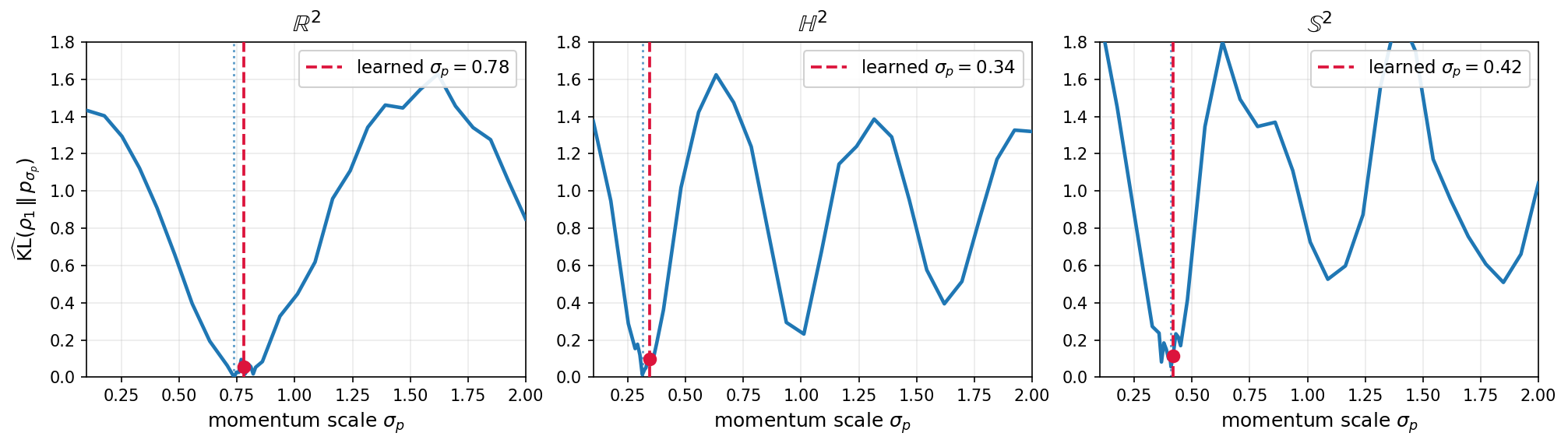}
    \caption{Transport with the analytic matched potential $V=-\sigma_p^2\log\rho_1$. Starting from the position base, we integrate the Hamiltonian flow for $T=1$ and estimate the KL divergence between the target and the model distribution, as a function of the momentum scale $\sigma_p$ (no training involved). Each curve has a clear optimum and the momentum scale learned by our trained RNHF models (red dashed line) is close to it on all three geometries.}
    \label{fig:refocusing}
\end{figure}

This empirical agreement is consistent with but not implied by the theory. The loss function only requires a good finite-time marginal so the learned potential differs from $V_\star$ in practice, as explained in Section~\ref{sec:memory}. The observed near-match suggests that these corrections are moderate in the present examples.

\section{Conclusion}
\label{sec:conclusion}

We introduced Riemannian Neural Hamiltonian Flows, an intrinsic Hamiltonian generative model on Riemannian manifolds. The architecture uses a learned scalar potential, a Gaussian momentum lift, and a geodesic leapfrog map whose Jacobian determinant is exactly one. This removes Jacobian determinants and divergence estimates from the variational objective. For an arbitrary potential, the dynamics begins through the balance between the potential force and the pressure that the momentum variance exerts on a non-uniform base. The matched potential is the interpretable specialization in which the implicit profile is the target distribution. In the isotropic Gaussian case, the corresponding Hamiltonian transport is exact: at a quarter period, any position base is mapped to the target marginal and the base information is transferred to momentum. Around general modes, the Riemannian Hessian of the matched potential produces local harmonic wells whose frequencies are controlled by the learned momentum scale. Reversibility then explains why the learned potential should be interpretable at all. The gap between the learned and the matched potential is the sum of the residual memory of the base and the bias of the model. The two potentials are equal when the base has been transferred to the momentum. Nothing in the objective asks for this, but the width of the position base turns it into a constraint. 

Numerical experiments on three controlled two-dimensional geometries are consistent with this theoretical framework. RNHF recovers bimodal targets with sample quality comparable to an intrinsic continuous-flow baseline, although the relative ranking depends on the manifold. More importantly for the proposed interpretation, the learned potentials place wells at the target modes and closely track the matched log-density landscape. However, one should keep in mind that the matched potential is canonical as an object of interpretation, not as a prescription: for a general target it is not isochronous, so no momentum scale brings all trajectories into focus at once, and our experiments show that the network is free to learn finite-time corrections that improve the marginal. 

Several questions remain open. The main theoretical problem is expressivity: characterize the pairs $(\rho_0,\rho_1)$ that can be connected exactly, or to prescribed accuracy, by an autonomous potential with an isotropic non-Dirac distribution for the initial momenta. A second question is quantitative rather than qualitative: the identity above says when the learned and matched potentials coincide, but not how the gap scales with the ratio of base to target width, nor whether it constrains the curvature of the wells as it constrains their position. Finally, regarding the numerical aspects, extending the architecture to general manifolds without closed-form geodesic flows will require implicit or approximate symplectic integrators, increasing the cost of training and inference.

\bibliography{refs}

@article{albergo2025,
  author  = {Michael S. Albergo and Nicholas M. Boffi and Eric Vanden-Eijnden},
  title   = {Stochastic Interpolants: A Unifying Framework for Flows and Diffusions},
  journal = {Journal of Machine Learning Research},
  year    = {2025},
  volume  = {26},
  number  = {1},
  pages   = {1--61},
  url     = {http://jmlr.org/papers/volume26/23-1605/23-1605.pdf}
}

@article{betancourt2017,
  author    = {Michael Betancourt},
  title     = {A Conceptual Introduction to Hamiltonian Monte Carlo},
  journal   = {arXiv preprint arXiv:1701.02434},
  year      = {2017},
  url       = {https://arxiv.org/abs/1701.02434}
}

@article{byrne2013,
  author  = {Byrne, Simon and Girolami, Mark},
  title   = {Geodesic {M}onte {C}arlo on Embedded Manifolds},
  journal = {Scandinavian Journal of Statistics},
  volume  = {40},
  number  = {4},
  pages   = {825--845},
  year    = {2013}
}

@inproceedings{caterini2018,
  author    = {Caterini, Anthony L. and Doucet, Arnaud and Sejdinovic, Dino},
  title     = {Hamiltonian Variational Auto-Encoder},
  booktitle = {Advances in Neural Information Processing Systems 31},
  year      = {2018}
}

@inproceedings{chen2024,
  author    = {Chen, Ricky T. Q. and Lipman, Yaron},
  title     = {Flow Matching on General Geometries},
  booktitle = {International Conference on Learning Representations},
  year      = {2024}
}

@inproceedings{chen2018,
  author    = {Chen, Ricky T. Q. and Rubanova, Yulia and Bettencourt, Jesse and Duvenaud, David},
  title     = {Neural Ordinary Differential Equations},
  booktitle = {Advances in Neural Information Processing Systems 31},
  year      = {2018}
}

@book{docarmo1992,
  author    = {do Carmo, Manfredo P.},
  title     = {Riemannian Geometry},
  publisher = {Birkh\"auser},
  year      = {1992}
}

@inproceedings{debortoli2022,
  author    = {De Bortoli, Valentin and Mathieu, Emile and Hutchinson, Michael and
               Thornton, James and Teh, Yee Whye and Doucet, Arnaud},
  title     = {Riemannian Score-Based Generative Modelling},
  booktitle = {Advances in Neural Information Processing Systems 35},
  year      = {2022}
}

@inproceedings{dinh2017,
  author    = {Dinh, Laurent and Sohl-Dickstein, Jascha and Bengio, Samy},
  title     = {Density Estimation Using {R}eal {NVP}},
  booktitle = {International Conference on Learning Representations},
  year      = {2017}
}

@inproceedings{dosovitskiy2021,
    title={An Image is Worth 16x16 Words: Transformers for Image Recognition at Scale},
    author={Alexey Dosovitskiy and Lucas Beyer and Alexander Kolesnikov and Dirk Weissenborn and Xiaohua Zhai and Thomas Unterthiner and Mostafa Dehghani and Matthias Minderer and Georg Heigold and Sylvain Gelly and Jakob Uszkoreit and Neil Houlsby},
    booktitle={International Conference on Learning Representations},
    year={2021},
    url={https://openreview.net/forum?id=YicbFdNTTy}
}

@ARTICLE{duane1987,
       author = {{Duane}, Simon and {Kennedy}, A.~D. and {Pendleton}, Brian J. and {Roweth}, Duncan},
        title = "{Hybrid Monte Carlo}",
      journal = {Physics Letters B},
         year = 1987,
        month = sep,
       volume = {195},
       number = {2},
        pages = {216-222},
          doi = {10.1016/0370-2693(87)91197-X},
       adsurl = {https://ui.adsabs.harvard.edu/abs/1987PhLB..195..216D}
}

@inproceedings{falorsi2020,
  title={Neural ordinary differential equations on manifolds},
  author={Falorsi, Luca and Forr{\'e}, Patrick},
  booktitle={Second Workshop on Invertible Neural Networks, Normalizing Flows, and Explicit Likelihood Models (ICML)},
  year={2020}
}

@article{gemici2016,
  author  = {Gemici, Mevlana C. and Rezende, Danilo J. and Mohamed, Shakir},
  title   = {Normalizing Flows on {R}iemannian Manifolds},
  journal = {arXiv preprint arXiv:1611.02304},
  year    = {2016}
}

@article{girolami2011,
  author  = {Girolami, Mark and Calderhead, Ben},
  title   = {{R}iemann Manifold {L}angevin and {H}amiltonian {M}onte {C}arlo Methods},
  journal = {Journal of the Royal Statistical Society: Series B},
  volume  = {73},
  number  = {2},
  pages   = {123--214},
  year    = {2011}
}

@inproceedings{gong2021,
  author    = {Gong, Yuan and Chung, Yu-An and Glass, James R.},
  title     = {{AST:} Audio Spectrogram Transformer},
  booktitle = {Interspeech 2021},
  pages     = {571--575},
  publisher = {{ISCA}},
  year      = {2021}
}

@inproceedings{goodfellow2014,
  author    = {Goodfellow, Ian and Pouget-Abadie, Jean and Mirza, Mehdi and Xu, Bing and
               Warde-Farley, David and Ozair, Sherjil and Courville, Aaron and Bengio, Yoshua},
  title     = {Generative Adversarial Nets},
  booktitle = {Advances in Neural Information Processing Systems 27},
  year      = {2014}
}

@book{goodfellow2016,
  title={Deep Learning},
  author={Ian Goodfellow and Yoshua Bengio and Aaron Courville},
  publisher={MIT Press},
  year={2016},
  url={http://www.deeplearningbook.org}
}

@inproceedings{grathwohl2019,
  author    = {Grathwohl, Will and Chen, Ricky T. Q. and Bettencourt, Jesse and
               Sutskever, Ilya and Duvenaud, David},
  title     = {{FFJORD}: Free-Form Continuous Dynamics for Scalable Reversible Generative Models},
  booktitle = {International Conference on Learning Representations},
  year      = {2019}
}

@book{hairer2006,
  author    = {Hairer, Ernst and Lubich, Christian and Wanner, Gerhard},
  title     = {Geometric Numerical Integration},
  publisher = {Springer},
  edition   = {2nd},
  year      = {2006}
}

@inproceedings{ho2020,
  author    = {Ho, Jonathan and Jain, Ajay and Abbeel, Pieter},
  title     = {Denoising Diffusion Probabilistic Models},
  booktitle = {Advances in Neural Information Processing Systems 33},
  year      = {2020}
}

@inproceedings{holderrieth2024,
  author    = {Holderrieth, Peter and Xu, Yilun and Jaakkola, Tommi},
  title     = {Hamiltonian Score Matching and Generative Flows},
  booktitle = {Advances in Neural Information Processing Systems 37},
  year      = {2024}
}

@inproceedings{hong2023,
    title={CogVideo: Large-scale Pretraining for Text-to-Video Generation via Transformers},
    author={Wenyi Hong and Ming Ding and Wendi Zheng and Xinghan Liu and Jie Tang},
    booktitle={The Eleventh International Conference on Learning Representations },
    year={2023},
    url={https://openreview.net/forum?id=rB6TpjAuSRy}
}

@inproceedings{kingma2014,
  author    = {Kingma, Diederik P. and Welling, Max},
  title     = {Auto-Encoding Variational {B}ayes},
  booktitle = {International Conference on Learning Representations},
  year      = {2014}
}

@book{landau1976,
  author = {Landau, L. D. and Lifshitz, E. M.},
  edition = 3,
  publisher = {Butterworth-Heinemann},
  title = {Mechanics, Third Edition: Volume 1 (Course of Theoretical Physics)},
  year = 1976
}

@book{lee2018,
  author    = {Lee, John M.},
  title     = {Introduction to {R}iemannian Manifolds},
  publisher = {Springer},
  series    = {Graduate Texts in Mathematics},
  volume    = {176},
  edition   = {2nd},
  year      = {2018}
}

@inproceedings{lipman2023,
  author    = {Lipman, Yaron and Chen, Ricky T. Q. and Ben-Hamu, Heli and Nickel, Maximilian and Le, Matt},
  title     = {Flow Matching for Generative Modeling},
  booktitle = {International Conference on Learning Representations},
  year      = {2023}
}

@inproceedings{liu2023,
  title={Flow Straight and Fast: Learning to Generate and Transfer Data with Rectified Flow},
  author={Liu, Xingchao and Gong, Chengyue and Liu, Qiang},
  booktitle={International Conference on Learning Representations},
  year={2023},
  url={https://openreview.net/forum?id=XVjTT1nw5z}
}

@inproceedings{lou2020,
  author    = {Lou, Aaron and Lim, Derek and Katsman, Isay and Huang, Leo and Jiang, Qingxuan and
               Lim, Ser-Nam and De Sa, Christopher},
  title     = {Neural Manifold Ordinary Differential Equations},
  booktitle = {Advances in Neural Information Processing Systems 33},
  year      = {2020}
}

@book{mardia2000,
  author    = {Mardia, Kanti V. and Jupp, Peter E.},
  title     = {Directional Statistics},
  publisher = {Wiley},
  series = {Wiley Series in Probability and Statistics},
  year      = {2000}
}

@inproceedings{mathieu2020,
  author    = {Mathieu, Emile and Nickel, Maximilian},
  title     = {{R}iemannian Continuous Normalizing Flows},
  booktitle = {Advances in Neural Information Processing Systems 33},
  pages     = {8205--8216},
  year      = {2020}
}

@inproceedings{nagano2019,
  author    = {Nagano, Yoshihiro and Yamaguchi, Shoichiro and Fujita, Yasuhiro and Koyama, Masanori},
  title     = {A Wrapped Normal Distribution on Hyperbolic Space for Gradient-Based Learning},
  booktitle = {International Conference on Machine Learning},
  pages     = {4693--4702},
  year      = {2019}
}

@inproceedings{nickel2017,
  title={Poincaré Embeddings for Learning Hierarchical Representations},
  author={Nickel, Maximilian and Kiela, Douwe},
  booktitle={Advances in Neural Information Processing Systems (NIPS)},
  pages={6341--6350},
  year={2017}
}

@incollection{neal2011,
  author    = {Neal, Radford M.},
  title     = {{MCMC} Using {H}amiltonian Dynamics},
  booktitle = {Handbook of {M}arkov Chain {M}onte {C}arlo},
  publisher = {Chapman \& Hall/CRC},
  pages     = {113--162},
  year      = {2011}
}

@article{papamakarios2021,
  author  = {Papamakarios, George and Nalisnick, Eric and Rezende, Danilo J. and
             Mohamed, Shakir and Lakshminarayanan, Balaji},
  title   = {Normalizing Flows for Probabilistic Modeling and Inference},
  journal = {Journal of Machine Learning Research},
  volume  = {22},
  number  = {57},
  pages   = {1--64},
  year    = {2021}
}

@inproceedings{perezcruz2008,
  author    = {P\'erez-Cruz, Fernando},
  title     = {Kullback--Leibler divergence estimation of continuous distributions},
  booktitle = {IEEE International Symposium on Information Theory (ISIT)},
  year      = {2008},
  pages     = {1666--1670},
}

@inproceedings{rezende2015,
  author    = {Rezende, Danilo J. and Mohamed, Shakir},
  title     = {Variational Inference with Normalizing Flows},
  booktitle = {International Conference on Machine Learning},
  pages     = {1530--1538},
  year      = {2015}
}

@inproceedings{rezende2020,
  author    = {Rezende, Danilo J. and Papamakarios, George and Racani\`ere, S\'ebastien and
               Albergo, Michael S. and Kanwar, Gurtej and Shanahan, Phiala E. and Cranmer, Kyle},
  title     = {Normalizing Flows on Tori and Spheres},
  booktitle = {International Conference on Machine Learning},
  pages     = {8083--8092},
  year      = {2020}
}

@book{robert2004,
  title={Monte Carlo Statistical Methods},
  author={Robert, Christian P. and Casella, George},
  edition={Second},
  year={2004},
  publisher={Springer},
  address={New York},
  doi={10.1007/978-1-4757-4145-2}
}

@book{rouviere2016,
  author    = {François Rouvière},
  title     = {Initiation à la géométrie de Riemann},
  publisher = {Calvage \& Mounet},
  year      = {2016}
}

@inproceedings{rozen2021,
  author    = {Rozen, Noam and Grover, Aditya and Nickel, Maximilian and Lipman, Yaron},
  title     = {Moser Flow: Divergence-Based Generative Modeling on Manifolds},
  booktitle = {Advances in Neural Information Processing Systems 34},
  year      = {2021}
}

@inproceedings{song2021,
  author    = {Song, Yang and Sohl-Dickstein, Jascha and Kingma, Diederik P. and
               Kumar, Abhishek and Ermon, Stefano and Poole, Ben},
  title     = {Score-Based Generative Modeling through Stochastic Differential Equations},
  booktitle = {International Conference on Learning Representations},
  year      = {2021}
}

@inproceedings{souveton2024,
  author    = {Souveton, Vincent and Guillin, Arnaud and Jasche, Jens and Lavaux, Guilhem and Michel, Manon},
  title     = {Fixed-Kinetic Neural {H}amiltonian Flows for Enhanced Interpretability
               and Reduced Complexity},
  booktitle = {International Conference on Artificial Intelligence and Statistics},
  pages     = {3178--3186},
  year      = {2024}
}

@article{tabak2010,
  author  = {Tabak, Esteban G. and Vanden-Eijnden, Eric},
  title   = {Density Estimation by Dual Ascent of the Log-Likelihood},
  journal = {Communications in Mathematical Sciences},
  volume  = {8},
  number  = {1},
  pages   = {217--233},
  year    = {2010}
}

@inproceedings{toth2020,
  author    = {Toth, Peter and Rezende, Danilo J. and Jaegle, Andrew and Racani\`ere, S\'ebastien and
               Botev, Aleksandar and Higgins, Irina},
  title     = {Hamiltonian Generative Networks},
  booktitle = {International Conference on Learning Representations},
  year      = {2020}
}

@inproceedings{vaswani2017,
  title={Attention is all you need},
  author={Vaswani, Ashish and Shazeer, Noam and Parmar, Niki and Uszkoreit, Jakob and Jones, Llion and Gomez, Aidan N and Kaiser, Lukasz and Polosukhin, Illia},
  booktitle={Advances in Neural Information Processing Systems},
  pages={5998--6008},
  year={2017}
}

\appendix

\section{Recap on Riemannian geometry}
\label{app:geometry}

We present the geometrical tools used in the paper, see \cite{docarmo1992, rouviere2016, lee2018}. $(\M,g)$ is a smooth, complete, connected $d$-dimensional Riemannian manifold. At each $q\in \M$, tangent vectors belong to $T_q\M$ and momentum covectors to the dual space $T^*_q\M$. The metric $g_q$ is an inner product on $T_q\M$, with matrix $G(q)$ in local coordinates. It identifies a covector $p$ with the velocity
\begin{equation}
  p^{\sharp}=g_q^{-1}p\in T_q\M,\qquad \normg{p}^2=\langle p,p^{\sharp}\rangle=p^{\top}G(q)^{-1}p .
  \label{eq:sharp}
\end{equation}
The Riemannian volume measure is $\dvol(q)=\sqrt{\det G(q)}\,\mathrm{d}q$ in local coordinates, and all densities in this paper are taken with respect to $\vol$. For a smooth scalar function $V$, its differential is $\mathrm{d}V$ and its Riemannian gradient $\gradg V=g^{-1}\mathrm{d}V$. The divergence of a vector field $X$ is $\divg X$, and $\Hess$ denotes the Riemannian Hessian. 

The exponential map $\exp_q(v)$ follows the geodesic leaving $q$ with initial velocity $v$ for unit time, $\Gamma_{q\to q'}$ denotes parallel transport along a geodesic, and $d_g$ is the geodesic distance. On every normal neighbourhood we write $J_q(v)$ for the volume Jacobian of $\exp_q$, defined intrinsically by
$$(\exp_q)^*\mathrm{d}\vol = J_q(v)\,\mathrm{d}v,$$
where $\mathrm{d}v$ is the Euclidean volume induced by $g_q$ on $T_q\M$. This is the object that lets us transfer a distribution from the flat tangent space to the manifold. If $\exp_o$ is injective on the support of $X\sim\mu$ and $Q=\exp_o(X)$, then the wrapped law has density
\begin{equation}
  \rho(\exp_ox)=\frac{\mu(x)}{J_o(x)} .
  \label{eq:wrapped}
\end{equation}

In this paper, we work with three model spaces, which cover the flat, negatively curved and positively curved cases. For the \textbf{Euclidean space $\R^2$}, the metric matrix is $G\equiv I_2$, the exponential map reads $\exp_x(v)=x+v$, parallel transport is the identity, $J\equiv1$, and $\vol$ is Lebesgue measure.

Regarding the \textbf{hyperbolic space $\Hyp^2$}, we use the hyperboloid model $\{x\in\R^{3}:\langle x,x\rangle_{\mathcal L}=-1,\ x_0>0\}$ with the Lorentzian product $\langle x,y\rangle_{\mathcal L}=-x_0y_0+\sum_{i\ge1}x_iy_i$, geodesic distance $d_g(x,y)=\mathrm{arccosh}(-\langle x,y\rangle_{\mathcal L})$, and $\exp_x(v)=\cosh(\lVert v\rVert_{\mathcal L})x+\sinh(\lVert v\rVert_{\mathcal L})v/\lVert v\rVert_{\mathcal L}$. The exponential map at any point is a global diffeomorphism: there is no cut locus. The volume Jacobian at geodesic radius $r=\lVert x\rVert$ is $J_{\Hyp^d}(r)=\sinh r/r$, which grows exponentially and encodes the fact that hyperbolic volume expands very fast. Wrapped Gaussians on $\Hyp^2$ are obtained from \eqref{eq:wrapped} \citep{nagano2019}.

Finally, the \textbf{sphere $\Sph^2$} is realised as the unit sphere of $\R^{3}$ with $\exp_q(v)=\cos(\lVert v\rVert)q+\sin(\lVert v\rVert)v/\lVert v\rVert$, geodesic distance $d_g(q,q')=\arccos(q^{\top}q')$, and volume Jacobian $\sin r/r$. The sphere is compact, so its total volume is finite, and $\exp_q$ has a cut locus at the antipode $-q$. The natural analogue of a Gaussian is the von Mises--Fisher density of directional statistics \citep{mardia2000}, $\mathrm{vMF}(q;\mu,\kappa)=c_2(\kappa)e^{\kappa\mu^{\top}q}$, with $c_2(\kappa)=\kappa/(4\pi\sinh\kappa)$.

\section{Verification on the three model spaces}
\label{app:spaces}

The experiments use smooth, strictly positive target densities, so the logarithmic score and the matched potential are globally defined. The integrations by parts in Appendix~\ref{app:moments} have no boundary term on $\Sph^2$. On $\mathbb R^2$ and $\mathbb H^2$, the Gaussian-type tails of the experimental densities provide the required decay.

\paragraph{Euclidean space.}
A finite Gaussian mixture on $\mathbb R^2$ is $C^\infty$ and strictly positive. Its matched potential grows quadratically in the tails when all mixture components have Gaussian tails. Energy sublevel sets are therefore bounded, and the exact Hamiltonian flow exists for finite time.

\paragraph{Hyperbolic space.}
In the hyperboloid model, the exponential map at the origin is a global diffeomorphism. A wrapped Gaussian density is the tangent Gaussian divided by the positive volume Jacobian of the exponential map. Finite mixtures are smooth and strictly positive. For the targets used here, the negative log-density grows along geodesic rays, so the matched potential confines finite-energy trajectories over the time interval of interest.

\paragraph{Sphere.}
A finite mixture of von Mises--Fisher densities is smooth and strictly positive on the compact manifold $\mathbb S^2$. Global existence of the Hamiltonian flow follows from compactness. Our position base is uniform and our target is von Mises--Fisher.

\section{Intrinsic weak derivation of the moment equations}
\label{app:moments}

We give an intrinsic weak derivation of the moment equations used in Theorem~\ref{thm:general-confinement}. The whole argument is one identity applied to two observables.

\paragraph{The transport identity.}
The Hamiltonian flow $\Phi_t$ preserves the Liouville volume $\lambda$ and transports the density, $f_t=f_0\circ\Phi_{-t}$. Hence, for any observable $A$ on $T^*\M$,
\begin{equation}
  \int_{T^*\M}A\,f_t\,\dd\lambda=\int_{T^*\M}(A\circ\Phi_t)\,f_0\,\dd\lambda .
  \label{eq:app-transport}
\end{equation}
The right-hand side is an integral against the fixed measure $f_0\lambda$, so differentiating in $t$ differentiates the trajectory and not the density. 
\begin{equation}
  \frac{\dd}{\dd t}\int_{T^*\M}A\,f_t\,\dd\lambda
  =\int_{T^*\M}\frac{\dd A}{\dd t}\,f_t\,\dd\lambda .
  \label{eq:app-key}
\end{equation}
Each moment equation below is \eqref{eq:app-key} for one choice of $A$, followed by integration over the momenta and one integration by parts on $\M$.

\paragraph{Integrating the fibres first.}
By $\dd\lambda=\dvol\,\dd\nu_q$, any integral over $T^*\M$ may be computed fibre by fibre. All the integrands we use are polynomials in $v=p^\sharp$ of degree at most two whose coefficients depend on $q$ alone, so the inner integral simply replaces $v^{\otimes k}$ by the $k$-th moment~\eqref{eq:moments}:
\begin{equation}
  \int_{T_q^*\M}c(q)\big[v^{\otimes k}\big]f_t\,\dd\nu_q(p)
  =c(q)\Big[\int_{T_q^*\M}v^{\otimes k}f_t\,\dd\nu_q(p)\Big],
  \qquad k=0,1,2,
  \label{eq:app-fibre}
\end{equation}
the coefficient coming out because it is constant on the fibre ($k=0$), or by linearity ($k=1$) and bilinearity ($k=2$) of $c(q)$ in its arguments. For $k=0$ this is marginalising out the momentum: with $\pi:T^*\M\to\M$ the projection, $\pi_\#(f_t\lambda)=\rho_t\dvol$, so
$\int_{T^*\M}\varphi(q)f_t\dd\lambda=\int_\M\varphi\rho_t\dvol$ for every $\varphi$. This is why the observables below are chosen affine in $p$.

\paragraph{Continuity equation.}
Take $A=\varphi(q)$ with $\varphi\in C_c^\infty(\M)$, and write $v=p^\sharp$. Since $\varphi$ does not depend on $p$, the only part of Hamilton's equations it feels is $\dot q=\partial_pH=v$, so $\dd A/\dd t=\dd\varphi_q[v]$, which is why the potential is absent from the continuity equation. Thus
\begin{align}
  \frac{\dd}{\dd t}\int_{T^*\M}\varphi(q)f_t(q,p)\,\dd\lambda
  &=\int_{T^*\M}\dd\varphi_q[v]\,f_t(q,p)\,\dd\lambda\\
  &=\int_{\M}\langle\gradg\varphi,\rho_tu_t\rangle_g\dvol .
\end{align}
The second line is \eqref{eq:app-fibre} with $k=1$: the linear form $\dd\varphi_q$ does not depend on $p$, so it comes out of the fibre integral and meets the first moment. Writing it as $\langle\gradg\varphi,\cdot\rangle_g$ is then just the definition of the gradient. The left-hand side is \eqref{eq:app-fibre} with $k=0$, namely $\frac{\dd}{\dd t}\int_\M\varphi\rho_t\dvol$. Integrating by parts, that is using $\divg(\varphi W)=\varphi\divg W+\langle\gradg\varphi,W\rangle_g$ and $\int_\M\divg(\varphi W)\dvol=0$ for compactly supported $\varphi W$, gives
\[
  \frac{\dd}{\dd t}\int_{\M}\varphi\rho_t\dvol
  =-\int_{\M}\varphi\divg(\rho_tu_t)\dvol ,
\]
and since $\varphi$ is arbitrary this is the continuity equation \eqref{eq:continuity}.

\paragraph{Momentum equation.}
The same three steps, with an observable that also records momentum. Let $X$ be a smooth compactly supported vector field and set
\[
  A_X(q,p)=\langle X(q),v\rangle_g .
\]
Averaging $A_X$ returns $\int_\M\langle X,\rho_tu_t\rangle_g\dvol$, which is the quantity whose evolution we want. The Hamiltonian equations for $H=\tfrac12\lVert p\rVert_{g^{-1}}^2+V$ are equivalent to the covariant Newton equation
\begin{equation}
  \nabla_t v=-\gradg V.
  \label{eq:covariant-newton-app}
\end{equation}
Differentiating $A_X$ along a trajectory then requires the product rule for $\langle\cdot,\cdot\rangle_g$, which holds without any correction term because $\nabla g=0$, and gives
\begin{align}
  \frac{\dd}{\dd t}A_X(Q_t,P_t)
  &=\langle\nabla_vX,v\rangle_g-\langle X,\gradg V\rangle_g ,
\end{align}
What matters next is their degree in $v$: the first is quadratic and the second does not involve $v$ at all, so integrating over the momentum fibres makes them meet the second moment \eqref{eq:moments} and the density respectively, and no other moment can possibly appear. Before averaging, write the three quantities involved in components, with $X_j:=g_{jk}X^k$, so that their degree in $v$ is visible:
\begin{equation}
  A_X=X_j\,v^j,
  \qquad
  \langle\nabla_vX,v\rangle_g=\nabla_iX_j\,v^iv^j,
  \qquad
  \langle X,\gradg V\rangle_g\ \text{ does not involve } v .
  \label{eq:app-degrees}
\end{equation}
They are respectively linear, quadratic and constant in $v$, so \eqref{eq:app-fibre} with $k=1,2,0$ turns their fibre integrals into the first, second and zeroth moments:
\begin{align}
  \int_{T_q^*\M}A_X\,f_t\,\dd\nu_q&=X_j\,(\rho_tu_t)^j=\langle X,\rho_tu_t\rangle_g,\\
  \int_{T_q^*\M}\langle\nabla_vX,v\rangle_g\,f_t\,\dd\nu_q&=\nabla_iX_j\,S_t^{ij},\\
  \int_{T_q^*\M}\langle X,\gradg V\rangle_g\,f_t\,\dd\nu_q&=\rho_t\,\langle X,\gradg V\rangle_g .
\end{align}
Applying \eqref{eq:app-key} to $A_X$ and sending each of its three terms down to $\M$ accordingly,
\begin{equation}
  \frac{\dd}{\dd t}\int_{\M}\langle X,\rho_tu_t\rangle_g\dvol
  =\int_{\M}\nabla_iX_j\,S_t^{ij}\dvol
  -\int_{\M}\rho_t\langle X,\gradg V\rangle_g\dvol .
  \label{eq:app-weakmom}
\end{equation}
In $\R^d$ the middle integrand is $\sum_{i,j}\partial_iX_j\,(S_t)_{ij}$, the Frobenius product of the Jacobian of $X$ with the matrix $S_t$. Since $S_t$ is symmetric, the order of its two indices is irrelevant. It remains to move the derivative off $X$, which is the same integration by parts as above applied to the auxiliary vector field $W^i:=X_j\,S_t^{ij}$: its divergence is $\nabla_iW^i=\nabla_iX_j\,S_t^{ij}+X_j\,\nabla_iS_t^{ij}$ and integrates to zero since $W$ has compact support, so the first term of \eqref{eq:app-weakmom} equals $-\int_\M\langle X,\divg S_t\rangle_g\dvol$. As the left-hand side is $\int_\M\langle X,\partial_t(\rho_tu_t)\rangle_g\dvol$, we have shown that
\begin{equation}
  \int_{\M}\big\langle X,\ \partial_t(\rho_tu_t)+\divg S_t+\rho_t\gradg V\big\rangle_g\dvol=0
\end{equation}
for every compactly supported $X$. A continuous vector field orthogonal to all of them vanishes identically, which is \eqref{eq:momentum-equation}.

\paragraph{Initial data.}
For the initial lift \eqref{eq:initiallift}, let us place in an orthonormal coframe where the components of $p$ are i.i.d.\ $\mathcal N(0,\tau)$. Odd moments vanish and the covariance is $\tau$ times the identity, i.e.
\[
  \int v\,M_\tau\,\dd\nu_q=0,
  \qquad
  \int v\otimes v\,M_\tau\,\dd\nu_q=\tau g^{-1},
\]
hence $u_0=0$ and $S_0=\tau\rho_0g^{-1}$. Metric
compatibility ($\nabla g^{-1}=0$) then gives
\[
  \big(\divg(\rho_0g^{-1})\big)^i=\nabla_j\big(\rho_0g^{ij}\big)=g^{ij}\nabla_j\rho_0,
  \qquad\text{that is}\qquad
  \divg(\rho_0g^{-1})=\gradg\rho_0 ,
\]
the divergence of an isotropic pressure being the gradient of the density. Evaluating the momentum equation at $t=0$, where $u_0=0$ cancels the term $u_0\,\partial_t\rho_t$ in the product rule so that $\partial_t(\rho_tu_t)|_0=\rho_0\,\partial_tu_t|_0$, gives
\[
  \rho_0\left.\partial_tu_t\right|_0
  =-\tau\gradg\rho_0-\rho_0\gradg V,
\]
which is \eqref{eq:generalacceleration} after dividing by $\rho_0>0$ and using $\gradg\rho_0/\rho_0=\gradg\log\rho_0$. The continuity equation gives $\partial_t\rho_t|_0=-\divg(\rho_0u_0)=0$, and differentiating it once more,
\[
  \left.\partial_{tt}\rho_t\right|_0=-\divg\big(\partial_t(\rho_tu_t)|_0\big)
  =\divg\!\left[\rho_0\gradg\!\left(V+\tau\log\rho_0\right)\right],
\]
which is \eqref{eq:seconddensity-general}.

\section{Additional Gaussian calculations}
\label{app:gaussian}

We now prove Theorem~\ref{thm:gaussian-refocusing}. Hamilton's equations give
\[
  \ddot Q_t=-\frac{\tau}{a^2}(Q_t-m)=-\omega^2(Q_t-m).
\]
The solution is
\[
  Q_T-m=\cos(\omega T)(Q_0-m)+\frac{\sin(\omega T)}{\omega}P_0.
\]
Writing $P_0=\sigma_pZ$ and using $\sigma_p/\omega=a$ yields \eqref{eq:gaussian-solution}. At the phases \eqref{eq:quarterperiod}, the coefficient of $Q_0-m$ vanishes and $|\sin(\omega T)|=1$, so the final position is $m\pm aZ$, which has the target law independently of $Q_0$. \\

Suppose in addition that $Q_0\sim\mathcal N(m_0,b^2I_d)$. Let
\[
  c_T=\cos(\omega T),
  \qquad
  s_T=\sin(\omega T).
\]
Equation~\eqref{eq:gaussian-solution} gives
\begin{equation}
  Q_T\sim\mathcal N\!\left(
  m+c_T(m_0-m),
  \left[c_T^2b^2+a^2s_T^2\right]I_d
  \right).
  \label{eq:gaussian-model-law}
\end{equation}
Writing $r_T=(c_T^2b^2+a^2s_T^2)^{1/2}$, the squared $2$-Wasserstein distance to the target is
\begin{equation}
  \mathcal W_2^2(\rho_T,\rho_1)
  =c_T^2\lVert m_0-m\rVert^2+d(r_T-a)^2.
  \label{eq:gaussian-w2}
\end{equation}
This error is periodic and generally non-monotone in $\sigma_p$. It vanishes whenever $c_T=0$, including the first quarter-period value \eqref{eq:optimal-sigma}. At the half-period $\omega T=\pi$, the momentum contribution vanishes and the position base is reflected about $m$. The phenomenon is illustrated in Fig.~\ref{fig:matched_lens}.

\begin{figure}[!ht]
  \centering
  \includegraphics[width=0.85\textwidth]{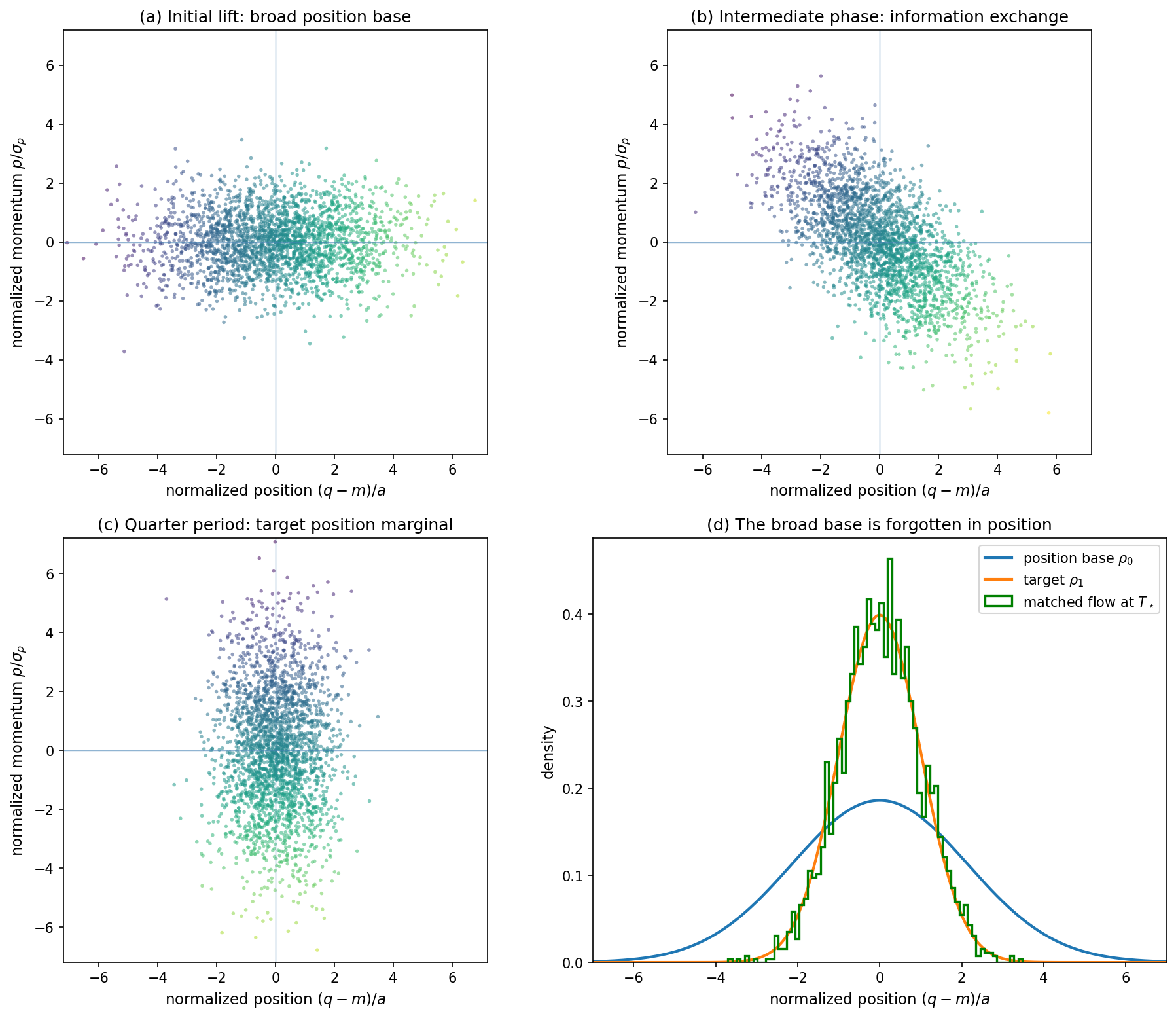}
  \caption{Exact finite-time phase-space rotation for a one-dimensional Gaussian target, shown in normalized phase-space coordinates. The position base is deliberately broader than the target. Colors encode the initial position. (a) Initially, the phase-space cloud is elongated along position. (b) The harmonic flow rotates the cloud and creates position--momentum correlation. (c) At a quarter period, the position coordinate has the target law, while the colors reveal that the base information is now stored in momentum. (d) The final position marginal matches the target.}
  \label{fig:matched_lens}
\end{figure}

\section{Exact transport with the matched potential}
\label{app:matched-exact}

Writing $\varphi:=\rho_0/\rho_1$ for the likelihood ratio between the base and the target, \eqref{eq:lensing} reads
\begin{equation}
  \rho_T=\rho_1\,\bar R_T,
  \qquad
  \bar R_T(q)=\E_{P\sim\Ng(0,\tau I)}\Bigl[\varphi\bigl(Q_{-T}(q,P)\bigr)\Bigr].
  \label{eq:matched-exact}
\end{equation}
Since $\rho_T$ is a probability density, $\bar R_T$ has mean one under $\rho_1$, so the output is the target reweighted by a fluctuation of unit average, and the error is exactly that fluctuation. In particular
\begin{equation}
  \KL(\rho_1\|\rho_T)=-\E_{\rho_1}\bigl[\log\bar R_T\bigr],
  \label{eq:matched-error}
\end{equation}
which vanishes precisely when $\bar R_T\equiv1$. Let us write $A$ for the operation sending a function on $\M$ to its average over the trajectories arriving at each point. It averages against a probability law, so constants are preserved, and the invariance of the lifted target under the matched flow means that $A$ also preserves the measure $\rho_1\dvol$. Exact transport is the condition $A\varphi=1$, that is $A(\varphi-1)=0$, which gives the following.

\begin{proposition}[Bases transported exactly by the matched potential]
  \label{prop:exact-bases}
  The matched potential transports $\rho_0$ to $\rho_1$ at time $T$ if and only if $\rho_0=\rho_1(1+\psi)$ with $1+\psi\ge0$ and $\psi\in\ker A$. The admissible bases thus form an affine family through $\rho_1$.
\end{proposition}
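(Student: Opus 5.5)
The plan is to reduce everything to the identity \eqref{eq:matched-exact}, i.e. Theorem~\ref{thm:lensing} specialised to $V=V_\star$. With $\pi_{V_\star,\tau}=\rho_1$ it reads $\rho_T=\rho_1\,A\varphi$, where $\varphi=\rho_0/\rho_1$ and $(Af)(q)=\E_{P\sim\Ng(0,\tau I)}[f(Q_{-T}(q,P))]$. Since $\rho_1>0$, exact transport $\rho_T=\rho_1$ holds pointwise if and only if $A\varphi\equiv1$.

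Next, use that $A$ is linear and preserves constants: $A1=\E_P[1]=1$. Then $A\varphi=1$ is equivalent to $A(\varphi-1)=0$. Set $\psi:=\varphi-1$. Any base can be written $\rho_0=\rho_1\varphi=\rho_1(1+\psi)$. Nonnegativity of $\rho_0$ is the same as $1+\psi\ge0$, because $\rho_1>0$. So $\rho_0$ is transported exactly if and only if $\psi\in\ker A$. Conversely, given $\psi\in\ker A$ with $1+\psi\ge0$, define $\rho_0:=\rho_1(1+\psi)$; it is transported exactly by the same chain read backwards.

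Normalisation needs a short remark. The invariance of $\rho_1M_\tau$ (Proposition~\ref{prop:matched-properties}) implies $\int A f\,\rho_1\dvol=\int f\,\rho_1\dvol$. Hence $A\psi=0$ forces $\int\psi\rho_1\dvol=0$, and $\rho_0$ automatically integrates to one. This invariance can be derived from the lensing identity itself: apply Liouville invariance to $f_0=\rho_1 f M_\tau$ and integrate over the fibres.

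For the affine claim, note that $\ker A$ is a linear subspace and the constraint $1+\psi\ge0$ is convex. The admissible set is therefore $\rho_1(1+\ker A)$ intersected with the cone of nonnegative functions, which is a convex piece of an affine family through $\rho_1$ (the case $\psi=0$).

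The main obstacle is the function-space setting, not the algebra. One must specify the class of $\psi$ on which $A$ acts so that the expectation is finite: for instance $\psi\in L^1(\rho_1\dvol)$, using measure preservation of $A$, or bounded $\psi$. One must also ensure that the backward flow $Q_{-T}$ is defined for almost every $(q,P)$ under Assumption~\ref{ass:standing}. I would settle this first by working in $L^1(\rho_1)$, where $A$ is a well-defined Markov contraction, and read all equalities $\rho_1$-almost everywhere.
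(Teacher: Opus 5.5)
Your proposal is correct and follows essentially the paper's own argument. Specialise the lensing identity to $V_\star$ to get $\rho_T=\rho_1\,A\varphi$. Use $A1=1$ to turn exact transport into $A(\varphi-1)=0$, and use the $\rho_1\dvol$-invariance of $A$ to get normalisation for free. Your additional points are accurate sharpenings of what the paper states loosely: setting $A$ up as a Markov contraction on $L^1(\rho_1\dvol)$, and noting that the admissible set is a convex subset of the affine space $\rho_1(1+\ker A)$ rather than a full affine family.
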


Exact transport thus holds on a whole family. The extreme case is the one from Theorem~\ref{thm:gaussian-refocusing}. In this case, the kernel is the entire space of mean-zero functions, and every base is transported exactly. Apart from this scenario, the same mechanism survives locally. Theorem~\ref{thm:local-lens} gives the frequencies $\omega_r=\sigma_p\sqrt{\lambda_r}$ of the linearised motion at a mode, so a single momentum scale brings every mode to a quarter period simultaneously when the modal curvatures agree.

\section{The RCNF baseline}
\label{app:rcnf}

The baseline is a Riemannian continuous normalizing flow \citep{mathieu2020} re-implemented with the same geometric pipeline as RNHF, so that the two models differ only in the flow family and the training objective. Particles evolve on $\M$ by the ordinary differential equation
\[
  \dot q_t \;=\; f_\theta(t, q_t) \in T_{q_t}\M, \qquad t \in [0, 1],
\]
where $f_\theta$ is a neural network whose output is projected onto the tangent space at the current point. The equation is integrated with the same re-anchored geodesic machinery as \eqref{eq:kdk}: fixed-size geodesic steps $q_{t + h} = \exp_{q_t}\!\big(h\, \tilde f\big)$, where $\tilde f$ is the geodesic midpoint estimate of the vector field (intermediate evaluations are parallel-transported back to the current tangent space before use). We use the midpoint rule so that the baseline matches the second-order accuracy of the leapfrog. Unlike the latter, the scheme has no phase space and hence no symplectic structure to preserve. The original implementation of \citet{mathieu2020} uses an adaptive projective Runge--Kutta solver in ambient coordinates whereas our fixed-step, re-anchored variant is equivalent in the continuous-time limit and makes the discrete geometry identical to RNHF's.

The log-density follows the Riemannian instantaneous change-of-variables formula,
\[
  \frac{\dd}{\dd t}\, \log p_t(q_t) \;=\; -\, \divg f_\theta(t, q_t),
  \qquad
  \divg f \;=\; \Tr\!\left(\frac{\partial f}{\partial x}\right) \;+\; \Big\langle f,\; \gradg \log\sqrt{\det G} \Big\rangle
\]
in local coordinates $x$ with metric matrix $G(x)$. Evaluated in the re-anchored normal coordinates centered at the current point, the metric correction satisfies $\gradg \log\sqrt{\det G}\,(0) = 0$, so the second term vanishes identically and the divergence reduces to the trace of $\partial f / \partial x$ in an orthonormal frame of the tangent space. This is the same mechanism that stabilizes the RNHF integrator: in a single fixed chart the metric term blows up near the cut locus, whereas at the moving anchor it is exactly zero. In our two-dimensional experiments, the trace is computed exactly, by differentiating $f_\theta$ along the two frame directions. In higher dimension, it can be estimated stochastically with Hutchinson's estimator, as in \citet{grathwohl2019,mathieu2020}. In practice, the RCNF velocity field is capped on $\mathbb R^2$ and $\mathbb H^2$ to stabilize the fixed-step likelihood integration. No such cap is used for RNHF. This difference should not be interpreted as unconditional numerical stability: the leapfrog still requires a sufficiently small step size. What is structural is that every RNHF substep has unit Liouville Jacobian, so training never evaluates a divergence or a change-of-volume term.

The base position distribution is identical to RNHF's (Section~\ref{sec:rnhf}). For a training point $q$, the ODE is integrated backward to $q^0$ while accumulating the divergence,
\[
  \log p_1(q) \;=\; \log p_0(q^0) \;-\; \int_0^1 \divg f_\theta(t, q^t)\, \dd t,
\]
and the model is trained by maximum likelihood. RCNF therefore uses the exact continuous-time change-of-density identity, evaluated numerically up to integration error, at the cost of one divergence (and, during training, of its gradient, which involves mixed second derivatives of $f_\theta$) per integration step. RNHF replaces this with a determinant-free symplectic map and a variational bound.

\section{Evaluation metrics}
\label{app:metrics}

Each sample-based metric is reported in the chart-free space. Distance-based metrics use the exact geodesic distance $d_g$: the arc length $\arccos \langle q, q' \rangle$ on $\Sph^d$, the Lorentzian formula $\operatorname{arccosh}(-\langle q, q' \rangle_{\mathrm{L}})$ on $\Hyp^d$, the Euclidean distance on $\R^d$. Moment-based metrics use the ambient embedding ($\R^3$ for $\Sph^2$, the hyperboloid in $\R^3$ for $\Hyp^2$). Below, $X = \{x_i\}_{i=1}^{n}$ are training samples and $Y = \{y_j\}_{j=1}^{m}$ generated samples, in whichever space is used, and $\|\cdot\|$ denotes the relevant distance. Lower is better for all metrics except precision and recall.

\paragraph{Kullback--Leibler divergence.}
We estimate the KL divergence $\mathrm{KL}(\rho_1 \,\|\, p_\theta)$ between the target and each model from samples alone, using the $k$-nearest-neighbour estimator of \citep{perezcruz2008}. Given $n$ target samples $\{x_i\}$ and $m$ model samples $\{y_j\}$ on a $d$-dimensional manifold, the estimator reads
\[
  \widehat{\mathrm{KL}}(\rho_1 \| p_\theta)
  = \frac{d}{n}\sum_{i=1}^{n} \log \frac{\nu_k(x_i)}{\rho_k(x_i)}
  + \log \frac{m}{n-1},
\]
where $\rho_k(x_i)$ is the geodesic distance from $x_i$ to its $k$-th nearest neighbour among the other target samples, and $\nu_k(x_i)$ the geodesic distance to its $k$-th nearest neighbour among the model samples. It requires no closed-form model density and therefore compares the two models symmetrically, using $4{,}000$ samples from each. We use $k=1$ and average over three independent draws.

\paragraph{Mean discrepancy.} With $\bar x = \frac{1}{n} \sum_{i=1}^{n} x_i$ and $\bar y = \frac{1}{m} \sum_{j=1}^{m} y_j$,
\[
  \mathrm{mean}_{\ell_2}(X, Y) \;=\; \|\bar x - \bar y\|_2 .
\]

\paragraph{Covariance discrepancy.} With $\widehat\Sigma_X = \frac{1}{n-1} \sum_{i=1}^{n} (x_i - \bar x)(x_i - \bar x)^\top$ and similarly $\widehat\Sigma_Y$,
\[
  \mathrm{cov}_{\mathrm{Fro}}(X, Y) \;=\; \big\| \widehat\Sigma_X - \widehat\Sigma_Y \big\|_F .
\]

\paragraph{Energy distance.} The empirical energy distance,
\[
  \widehat{\mathcal{E}}(X, Y)
  \;=\;
  \frac{2}{nm} \sum_{i=1}^{n} \sum_{j=1}^{m} \|x_i - y_j\|
  \;-\;
  \frac{1}{n^2} \sum_{i=1}^{n} \sum_{i'=1}^{n} \|x_i - x_{i'}\|
  \;-\;
  \frac{1}{m^2} \sum_{j=1}^{m} \sum_{j'=1}^{m} \|y_j - y_{j'}\|,
\]
compares cross-distances between the two samples with the internal distances within each. 

\paragraph{Precision and recall at the training radius.} A reference radius is computed from the training data only: with $r^{(k)}_i$ the distance from $x_i$ to its $k$-th nearest neighbor in $X \setminus \{x_i\}$ (we use $k = 5$), $r \;=\; \operatorname{median}_{1 \le i \le n}\; r^{(k)}_i$, and
\begin{align*}
  \mathrm{Precision}(X,Y)
  &=\frac{1}{m}\sum_{j=1}^{m}\mathbf{1}\left[\min_{1\le i\le n}\|y_j-x_i\|\le r\right],\\
  \mathrm{Recall}(X,Y)
  &=\frac{1}{n}\sum_{i=1}^{n}\mathbf{1}\left[\min_{1\le j\le m}\|x_i-y_j\|\le r\right].
\end{align*}
High precision means that generated samples fall in regions supported by the training data. High recall means that they cover most of the training support. The same radius $r$ is used for both models.

\end{document}